\documentclass{article}

\usepackage[preprint]{neurips_2026}

\usepackage[utf8]{inputenc}
\usepackage[T1]{fontenc}

\usepackage{hyperref}
\usepackage{url}

\usepackage{booktabs}
\usepackage{multirow}

\usepackage{amsfonts}
\usepackage{amsmath, amssymb, amsthm}
\usepackage{mathtools}
\usepackage{nicefrac}

\usepackage{microtype}

\usepackage{graphicx}
\usepackage{float}

\usepackage{algorithm}
\usepackage{algorithmic}
\providecommand{\TO}{\textbf{to}}
\providecommand{\RETURN}{\STATE \textbf{return}~}

\usepackage{enumitem}

\usepackage[table]{xcolor}

\newtheorem{theorem}{Theorem}
\newtheorem{lemma}[theorem]{Lemma}
\newtheorem{proposition}[theorem]{Proposition}

\newtheorem{remark}{Remark}
\theoremstyle{definition}
\newtheorem{definition}{Definition}
\newtheorem{assumption}{Assumption}

\newcommand{\E}{\mathbb{E}}
\newcommand{\Var}{\mathrm{Var}}
\newcommand{\Dir}{\mathrm{Dir}}
\newcommand{\Beta}{\mathrm{Beta}}
\newcommand{\neff}{n_{\mathrm{eff}}}
\newcommand{\calD}{\mathcal{D}}
\newcommand{\calC}{\mathcal{C}}

\newcommand{\bw}{h}

\title{Weighted Bayesian Conformal Prediction}

\author{%
  Xiayin Lou \\
  Chair of Cartography and Visual Analytics \\
  Technical University of Munich \\
  Munich, Germany \\
  \texttt{xiayin.lou@tum.de} \\
  \And
  Peng Luo\thanks{Corresponding author: \texttt{pengluo@mit.edu}.} \\
  Senseable City Lab \\
  Massachusetts Institute of Technology \\
  Cambridge, MA, USA \\
  \texttt{pengluo@mit.edu} \\
}

\begin{document}

\maketitle

% ============================================================
\begin{abstract}
Conformal prediction provides distribution-free prediction intervals with finite-sample coverage guarantees, and a recent line of work casts it as \emph{Bayesian Quadrature for Conformal Prediction} (BQ-CP) to deliver data-conditional guarantees through a posterior distribution over the prediction threshold. This Bayesian view, however, relies on the i.i.d.\ assumption and therefore does not extend to settings with covariate shift, spatial structure, or other forms of non-uniform calibration influence. Weighted conformal prediction addresses such settings via importance weights but remains purely frequentist, returning only a point-estimate threshold and offering no information about its reliability. We introduce \textbf{Weighted Bayesian Conformal Prediction (WBCP)}, a unified framework that brings the Bayesian threshold posterior of BQ-CP to arbitrary weighted conformal procedures. The core construction replaces the uniform Dirichlet model over calibration spacings with a weighted Dirichlet whose concentration is set by Kish's effective sample size, providing a principled bridge between the frequentist coverage of weighted conformal prediction and the meta-uncertainty quantification of Bayesian conformal prediction. We establish four theoretical results: a calibration-consistency property identifying the effective sample size as the unique variance-matching concentration; a posterior concentration rate; an extension of BQ-CP's stochastic dominance guarantee to weighted settings; and an improved conditional coverage bound. We instantiate the framework for spatial prediction as \emph{Geographical Bayesian Conformal Prediction}, where kernel-based spatial weights yield per-location threshold posteriors and interpretable spatial diagnostics, including effective sample size maps and posterior standard deviation maps. Across $13$ geospatial datasets, the proposed adaptive variant AdaGeoBCP achieves coverage comparable to the i.i.d.\ Bayesian baseline while additionally returning \emph{spatially varying} per-sample uncertainty that the i.i.d.\ baseline cannot supply. The headline contribution is therefore not improved coverage at fixed budget, but \emph{comparable coverage with per-location meta-uncertainty}---a structural feature that turns each prediction interval into a diagnostic rather than just a width.
\end{abstract}

% ============================================================
\section{Introduction}
\label{sec:intro}

\begin{figure}[t]
  \centering
  \includegraphics[width=\linewidth]{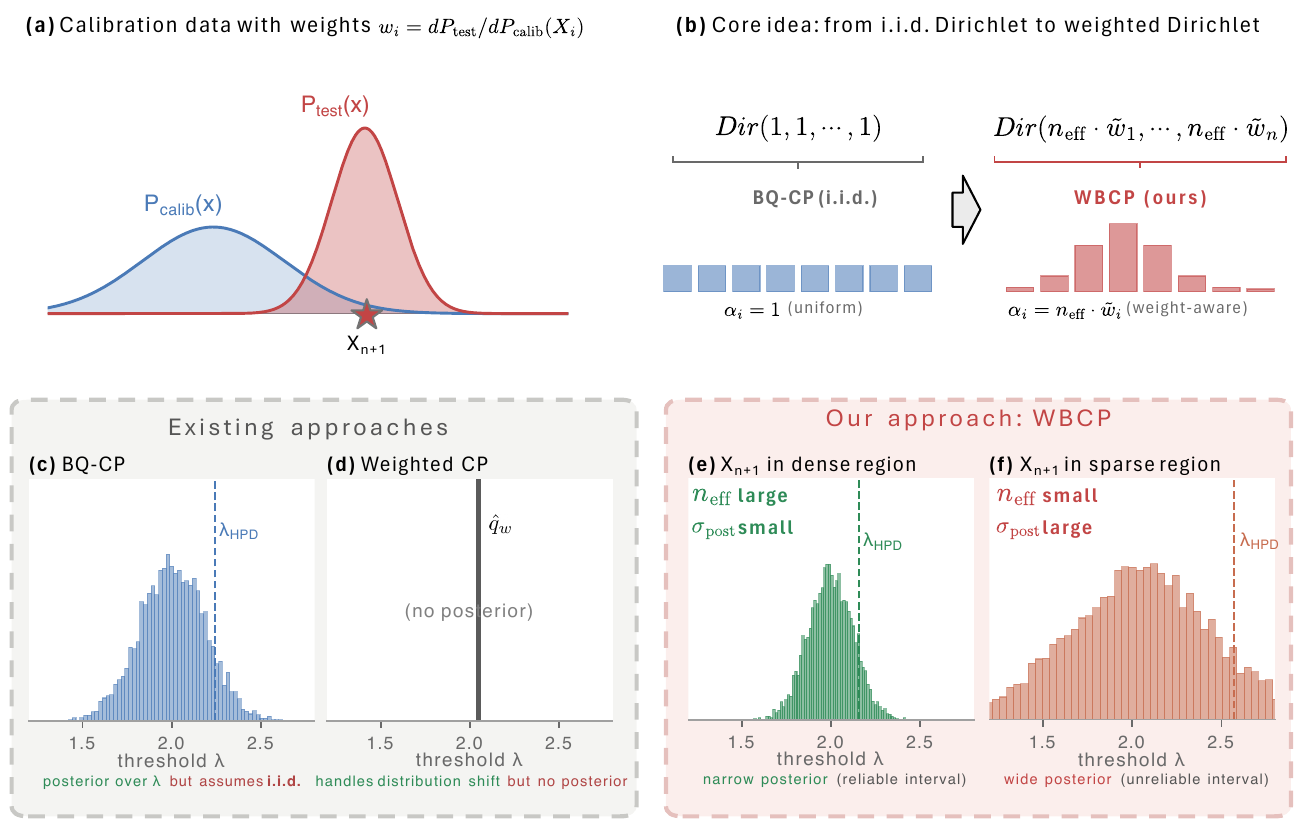}
  \caption{\textbf{WBCP overview.}
  (a) Calibration vs.\ test distribution with importance weights $w_i = dP_{\mathrm{test}}/dP_{\mathrm{calib}}(X_i)$.
  (b) WBCP replaces BQ-CP's $\Dir(1,\ldots,1)$ with the weighted Dirichlet $\Dir(\neff\,\tilde{w}_1,\ldots,\neff\,\tilde{w}_n)$.
  (c--d) Existing approaches give either a posterior \emph{or} shift-robustness, not both.
  (e--f) WBCP gives both: the threshold posterior makes interval-width reliability explicit.}
  \label{fig:wbcp_overview}
\end{figure}

Conformal prediction (CP)~\citep{vovk2005algorithmic} provides distribution-free prediction intervals with finite-sample coverage guarantees: for any model and miscoverage level~$\alpha$, the constructed prediction set contains the true outcome with probability at least $1-\alpha$. This elegant guarantee has made CP a cornerstone of modern uncertainty quantification~\citep{lei2018distribution, romano2019conformalized}.

\textbf{Two levels of uncertainty.} It is important to distinguish two fundamentally different levels of uncertainty in any prediction system. The \emph{first level} is the predictive uncertainty about the outcome~$Y$ itself: how far might the true value deviate from the prediction $\hat{f}(X)$? CP addresses this level by constructing an interval $[\hat{f}(X) \pm \hat{q}]$. The \emph{second level}---which CP does \emph{not} address---is the uncertainty about the threshold $\hat{q}$ itself: how reliable is this interval half-width? Is it well-determined by the calibration data, or could it be substantially different? We call this second level \emph{meta-uncertainty}: the uncertainty about the uncertainty quantification (formalized in Definition~\ref{def:two-levels}).

Standard CP produces a \emph{single deterministic} threshold $\hat{q}$, the $(1-\alpha)$-quantile of calibration scores, with no indication of its reliability. Yet two prediction intervals of identical width may carry vastly different levels of statistical support. Consider two scenarios: $\hat{q} = 50$ computed from 200 relevant calibration points versus the same $\hat{q} = 50$ computed from 5 distant, barely relevant points. The first $\hat{q}$ is precisely determined; the second could easily have been 30 or 80 had the calibration data been slightly different. Standard CP treats them identically---this meta-uncertainty gap limits CP's utility in high-stakes applications.

\textbf{Bayesian Quadrature for Conformal Prediction (BQ-CP).} A Bayesian-quadrature reformulation of CP~\citep{snell2025conformal} addresses the meta-uncertainty gap by treating the locations of the calibration scores as defining $n+1$ random spacings on $[0,1]$ whose joint distribution, under i.i.d.\ data, is $\Dir(1, \ldots, 1)$. Sampling from this Dirichlet induces a full posterior $p(\lambda \mid \calD)$ over the threshold rather than a single point estimate, and the resulting Highest Posterior Density (HPD) threshold enjoys data-conditional guarantees that are strictly stronger than CP's marginal guarantee. The Dirichlet model, however, is derived from the probability integral transform under exchangeable data, so the framework is restricted to the i.i.d.\ regime; under covariate shift, spatial structure, or any other form of non-uniform calibration influence, the Dirichlet specification is no longer valid and the data-conditional guarantee no longer applies.

\textbf{Weighted Conformal Prediction.} When test and calibration distributions differ (covariate shift), \citet{tibshirani2019conformal} introduced the concept of \emph{weighted exchangeability} and proved that reweighting calibration scores by likelihood ratios maintains valid coverage guarantees. This foundational result has been applied to spatial settings~\citep{lou2025geoconformal}, domain adaptation, and fairness-aware prediction. But weighted CP remains purely frequentist: it replaces the uniform quantile with a weighted quantile $\hat{q}_w$, which is still a \emph{single deterministic threshold} per weight profile. It solves the covariate shift problem but does not address meta-uncertainty at all.

\textbf{The gap.} No existing method provides both (i)~the Bayesian posterior over thresholds and data-conditional guarantees of BQ-CP \emph{and} (ii)~the distribution-shift robustness of weighted CP. This gap is not merely theoretical: in spatial prediction, for instance, a prediction interval based on 200 nearby comparable sales carries fundamentally different reliability than one based on 5 distant sales---yet neither BQ-CP (which ignores spatial structure entirely) nor weighted CP (which provides no meta-uncertainty) can distinguish these cases.

\textbf{Our contribution: Weighted Bayesian Conformal Prediction (WBCP).} We bridge this gap by generalizing BQ-CP to arbitrary importance-weighted settings. Our key insight is that the same importance weights that provide weighted CP's coverage guarantee~\citep{tibshirani2019conformal} map naturally to Dirichlet concentration parameters via the weighted Bayesian bootstrap~\citep{newton1994approximate}, with Kish's effective sample size $\neff$ serving as the concentration scaling. Figure~\ref{fig:wbcp_overview} summarizes the resulting framework and its position relative to existing methods: panels (a)--(b) show how the importance weights induced by distribution shift lead to a weighted Dirichlet that strictly generalizes BQ-CP's uniform Dirichlet; panels (c)--(d) contrast the two limitations of existing methods (BQ-CP gives a posterior but assumes i.i.d.; weighted CP handles shift but produces only a point estimate); and panels (e)--(f) illustrate that WBCP simultaneously delivers both, with the threshold posterior automatically widening in data-sparse regions and concentrating in data-dense ones. Crucially, WBCP does not introduce new assumptions: it inherits weighted CP's weighted exchangeability assumption and enriches the resulting point-estimate threshold with a full posterior distribution. Specifically, we make the following contributions:

\begin{enumerate}[nosep, leftmargin=*]
  \item We propose WBCP, which replaces BQ-CP's $\Dir(1,\ldots,1)$ with $\Dir(\neff \cdot \tilde{w}_1, \ldots, \neff \cdot \tilde{w}_n)$, producing \emph{per-weight-profile} posteriors with weight-dependent concentration (\S\ref{sec:method}).
  \item We prove four theoretical results (\S\ref{sec:theory}): calibration consistency of the $\neff$ scaling (Theorem~\ref{thm:calibration}), posterior concentration at rate $O(1/\sqrt{\neff})$ (Theorem~\ref{thm:concentration}), weighted stochastic dominance extending BQ-CP's guarantee (Theorem~\ref{thm:dominance}), and a conditional coverage bound (Theorem~\ref{thm:coverage}).
  \item We instantiate WBCP for spatial prediction as \textbf{AdaGeoBCP}, an adaptive-bandwidth Geographical Bayesian Conformal Predictor that uses a per-location $k$-NN kernel bandwidth as the default; we ablate this choice against fixed-bandwidth and frequentist counterparts in \S\ref{sec:geo}.
  \item We validate WBCP on synthetic data and $13$ real-world geospatial datasets (\S\ref{sec:experiments}). On all $13$ datasets, AdaGeoBCP matches the coverage of the i.i.d.\ Bayesian baseline ($13/13$ datasets meet the $1-\alpha$ target) and \emph{additionally} returns per-location $\neff$ and $\sigma_{\mathrm{post}}$ that vary by an order of magnitude across space---i.i.d.\ baselines return constant values (loc-std $\approx 0$) at every location and therefore cannot deliver this per-sample diagnostic.
\end{enumerate}

% ============================================================
\section{Related Work}
\label{sec:related}

\textbf{Conformal prediction.} CP~\citep{vovk2005algorithmic} and its split variant~\citep{papadopoulos2002inductive, lei2018distribution} provide distribution-free coverage. Extensions include conformalized quantile regression~\citep{romano2019conformalized}, conformal risk control~\citep{angelopoulos2022conformal, bates2021distribution}, and distributional CP~\citep{chernozhukov2021distributional}.

\textbf{Weighted and localized CP.} \citet{tibshirani2019conformal} introduced weighted exchangeability and proved that reweighted conformal procedures maintain coverage under covariate shift, establishing the theoretical foundation that WBCP builds upon. \citet{guan2023localized} proposed kernel-based localization. \citet{lou2025geoconformal} applied spatial weighting for geographic prediction. All remain frequentist, providing point-estimate thresholds without meta-uncertainty.

\textbf{Bayesian approaches to CP.} \citet{fong2021conformal} explored conformal Bayesian computation. \citet{snell2025conformal} established BQ-CP with data-conditional guarantees under i.i.d.\ data. Our work directly extends BQ-CP to the weighted setting.

\textbf{Spatial UQ.} Kriging~\citep{fotheringham2002geographically} provides spatial prediction variance under Gaussian assumptions. GeoCP~\citep{lou2025geoconformal} is distribution-free but frequentist. WBCP combines the strengths of both. Recent applications of spatial conformal prediction include~\citet{zhang2025spatially, xu2025geoxcp, mao2024spatial}.

% ============================================================
\section{Background}
\label{sec:background}

\subsection{Split Conformal Prediction}

Split conformal prediction~\citep{papadopoulos2002inductive, lei2018distribution} partitions data into training and calibration sets $\calD_{\mathrm{cal}} = \{(X_i, Y_i)\}_{i=1}^{n}$. A model $\hat{f}$ is trained, and nonconformity scores $\rho_i = |Y_i - \hat{f}(X_i)|$ are computed. The prediction interval for a new test point is $\calC(X_{\mathrm{new}}) = [\hat{f}(X_{\mathrm{new}}) \pm \hat{q}]$, where $\hat{q}$ is the $\lceil (1-\alpha)(n+1) \rceil / n$ quantile of $\{\rho_i\}$. Under exchangeability, $\Pr(Y_{\mathrm{new}} \in \calC) \geq 1-\alpha$.

\subsection{Weighted Conformal Prediction}
\label{sec:weighted-cp}

Standard CP requires exchangeability of training and test data, which fails under distribution shift. \citet{tibshirani2019conformal} extended CP beyond exchangeability by introducing a \emph{weighted} notion of exchangeability and showing that appropriately reweighted conformal procedures retain coverage guarantees.

\textbf{Covariate shift model.} Consider training data $(X_i, Y_i) \overset{\mathrm{i.i.d.}}{\sim} P = P_X \cdot P_{Y|X}$ for $i = 1, \ldots, n$, and a test point $(X_{n+1}, Y_{n+1}) \sim \tilde{P} = \tilde{P}_X \cdot P_{Y|X}$, drawn independently. The conditional $P_{Y|X}$ is shared, but the covariate distributions $P_X$ and $\tilde{P}_X$ may differ. The key quantity is the \emph{likelihood ratio} $w(x) = d\tilde{P}_X / dP_X(x)$.

\textbf{Weighted exchangeability.} \citet{tibshirani2019conformal} introduced the following generalization of exchangeability: random variables $V_1, \ldots, V_n$ are \emph{weighted exchangeable} with weight functions $w_1, \ldots, w_n$ if their joint density factors as
\begin{equation}
  f(v_1, \ldots, v_n) = \prod_{i=1}^{n} w_i(v_i) \cdot g(v_1, \ldots, v_n),
  \label{eq:weighted-exch}
\end{equation}
where $g$ is permutation-invariant. When $w_i \equiv 1$ for all $i$, this reduces to ordinary exchangeability. Crucially, independent draws from different distributions are always weighted exchangeable: if $Z_i \sim P_i$ independently, then $Z_1, \ldots, Z_n$ are weighted exchangeable with $w_i = dP_i / dP_1$ (their Lemma~2). The covariate shift model is a special case: the training-test data are weighted exchangeable with $w_i \equiv 1$ for training points and $w_{n+1}(x) = d\tilde{P}_X / dP_X(x)$ for the test point.

\textbf{Coverage guarantee.} Under weighted exchangeability, the weighted conformal prediction set
\begin{equation}
  \hat{C}_n(x) = \Bigl\{y : V_{n+1}^{(x,y)} \leq \mathrm{Quantile}\Bigl(1-\alpha,\; \textstyle\sum_{i=1}^{n} p_i^w(x)\, \delta_{V_i^{(x,y)}} + p_{n+1}^w(x)\, \delta_\infty\Bigr)\Bigr\},
  \label{eq:weighted-conformal-set}
\end{equation}
with $p_i^w(x) = w(X_i) / \bigl(\sum_{j=1}^n w(X_j) + w(x)\bigr)$, satisfies $\Pr(Y_{n+1} \in \hat{C}_n(X_{n+1})) \geq 1-\alpha$ (Corollary~1 and Theorem~2 of~\citealp{tibshirani2019conformal}). For split conformal with pre-fitted $\hat{f}$, this reduces to the \emph{weighted quantile}:
\begin{equation}
  \hat{q}_w = \inf\Bigl\{q : \sum_{i=1}^{n} \tilde{w}_i\, \mathbf{1}[\rho_i \leq q] \geq 1-\alpha\Bigr\},
  \quad \tilde{w}_i = w_i / \textstyle\sum_j w_j.
  \label{eq:weighted-quantile}
\end{equation}
This framework encompasses covariate shift correction, spatial weighting (GeoCP,~\citealp{lou2025geoconformal}), and localized CP~\citep{guan2023localized}. In all cases, the output is a single deterministic threshold per weight profile---the weighted quantile $\hat{q}_w$ provides no information about its own reliability.

\subsection{Conformal Prediction as Bayesian Quadrature (BQ-CP)}

\citet{snell2025conformal} showed that conformal threshold selection can be viewed as Bayesian Quadrature over the expected loss. The stochastic upper bound on expected loss is:
\begin{equation}
  L^+ = \sum_{i=1}^{n+1} U_i \cdot \ell_{(i)}, \quad \boldsymbol{U} \sim \Dir(1, \ldots, 1),
  \label{eq:bqcp}
\end{equation}
where $\ell_{(1)} \leq \cdots \leq \ell_{(n+1)}$ are ordered losses and $\ell_{(n+1)} = B$ (upper bound). The Dirichlet $\Dir(1,\ldots,1)$ arises from the probability integral transform under i.i.d.\ data (their Lemma~4.2). The Highest Posterior Density (HPD) threshold satisfies $\Pr(L^+ \leq \alpha \mid \calD) \geq \beta$, providing a \emph{data-conditional} guarantee stronger than CP's marginal guarantee.

\textbf{Key limitation.} The derivation of $\Dir(1,\ldots,1)$ requires i.i.d.\ data. When weights are non-uniform---as in any covariate shift, spatial, or localized setting---this Dirichlet model is invalid, and BQ-CP's guarantees no longer hold.

% ============================================================
\section{Weighted Bayesian Conformal Prediction}
\label{sec:method}

\begin{figure}[t]
  \centering
  \includegraphics[width=\linewidth]{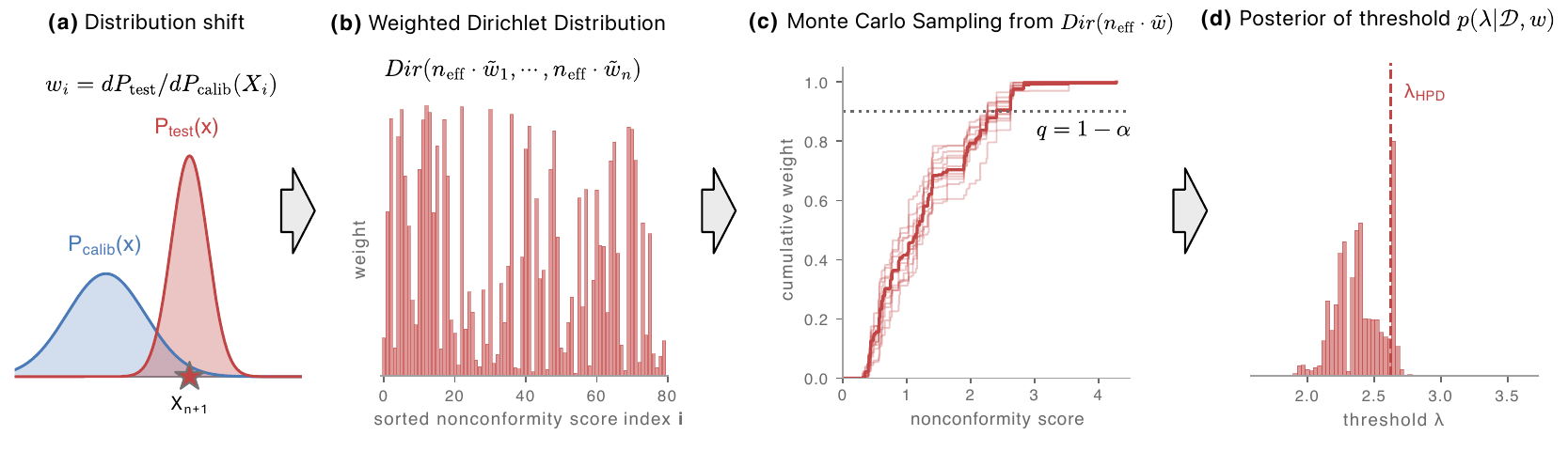}
  \caption{\textbf{The WBCP pipeline.}
  (a) Importance weights correct for distribution shift.
  (b) Weights and Kish's effective sample size $\neff$ define the weighted Dirichlet $\Dir(\neff\,\tilde{w}_1,\ldots,\neff\,\tilde{w}_n)$.
  (c) Monte Carlo draws from this Dirichlet yield staircase weighted CDFs; each crosses $y = q = 1-\alpha$ at a different score.
  (d) The crossing scores form the threshold posterior $p(\lambda \mid \calD, \boldsymbol{w})$; its $\beta$-quantile gives $\lambda_{\mathrm{HPD}}$.}
  \label{fig:wbcp_pipeline}
\end{figure}

\subsection{General Framework}

Figure~\ref{fig:wbcp_pipeline} illustrates the four-stage WBCP pipeline that we develop in this section: (a)~importance weights model the calibration--test distribution shift, (b)~these weights together with the effective sample size $\neff$ define the weighted Dirichlet, (c)~Monte Carlo draws from this Dirichlet produce a family of weighted CDFs whose $(1-\alpha)$-crossings vary across draws, and (d)~the collection of crossing scores forms the threshold posterior $p(\lambda \mid \calD, \boldsymbol{w})$, whose $\beta$-quantile is the HPD threshold $\lambda_{\mathrm{HPD}}$.

Consider a weighted conformal prediction setting with calibration scores $\{\rho_i\}_{i=1}^n$, a test point $X_{n+1}$, and calibration weights $\{w_i\}_{i=1}^{n}$ defined relative to the target test location or test distribution. Let $\tilde{w}_i = w_i / \sum_{j=1}^{n} w_j$ for $i=1,\ldots,n$, so $\sum_{i=1}^{n} \tilde{w}_i = 1$. WBCP places a weighted Dirichlet posterior over the \emph{calibration-score spacings},
\begin{equation}
  \underbrace{\Dir(1, \ldots, 1)}_{\text{BQ-CP (uniform)}}
  \;\longrightarrow\;
  \underbrace{\Dir\bigl(\neff\,\tilde{w}_1,\, \ldots,\, \neff\,\tilde{w}_n\bigr)}_{\text{WBCP (weighted)}},
  \label{eq:mapping}
\end{equation}
where $\neff$ is the Kish effective sample size~\citep{kish1965survey},
\begin{equation}
  \neff = \frac{\bigl(\sum_{i=1}^{n} w_i\bigr)^2}{\sum_{i=1}^{n} w_i^2}.
  \label{eq:neff}
\end{equation}
This keeps the posterior construction aligned with the calibration quantile while using the same weight profile that appears in weighted conformal prediction. We view this as a weighted Bayesian analogue of the BQ-CP spacing model rather than an exact identity of generative constructions; the test-point weight enters the coverage interpretation through the standard $\delta_\infty$ normalisation of weighted CP~\citep{tibshirani2019conformal} but is not part of the Dirichlet posterior over calibration spacings.

\textbf{Justification via the weighted bootstrap.} The construction has two complementary justifications. \emph{First}, from the Bayesian nonparametric perspective, Eq.~\eqref{eq:mapping} is a \emph{weighted bootstrap variant} with concentration $c = \neff$: \citet{newton1994approximate}'s weighted Bayesian bootstrap takes $\Dir(c\,\tilde{w}_i)$ for any positive $c$, and our contribution is the specific choice $c = \neff$, which is justified in Theorem~\ref{thm:calibration} by matching the Bayesian posterior variance to the frequentist sampling variance of the weighted estimator. With uniform weights, $\neff = n$ and we recover the (rescaled) Bayesian bootstrap of~\citet{rubin1981bayesian}.

\emph{Second}, from the frequentist perspective, WBCP \emph{enriches} weighted CP's coverage foundation rather than altering it. \citet{tibshirani2019conformal} show that under weighted exchangeability (Eq.~\ref{eq:weighted-exch}), the weighted quantile $\hat{q}_w$ achieves marginal coverage $\geq 1-\alpha$. WBCP keeps the same calibration scores and weights and overlays a Dirichlet posterior over the threshold, turning the point estimate $\hat{q}_w$ into a full distribution $p(\lambda \mid \calD, \mathbf{w})$. By the Bernstein--von Mises argument of Theorem~\ref{thm:coverage} (Step~3), $\lambda_{\mathrm{HPD}} = \hat{q}_w + \Phi^{-1}(\beta)\,\sigma_{\mathrm{post}} + o_p(1/\sqrt{\neff})$, so for $\beta > 1/2$ the HPD threshold lies above the weighted point estimate up to terms vanishing in $1/\sqrt{\neff}$. We therefore make the more measured claim that WBCP \emph{enriches} weighted CP with a posterior over $\lambda$ and matches its marginal coverage in the high-$\neff$ limit, rather than that it exactly preserves finite-sample coverage. The associated data-conditional guarantee continues to hold via Theorem~\ref{thm:dominance}.

\subsection{Effective Sample Size as Concentration Parameter}

The choice $c = \neff$ is the \emph{asymptotically unique} concentration that ensures the Bayesian posterior variance matches the frequentist sampling variance of the weighted estimator (Theorem~\ref{thm:calibration}). Intuitively, $\neff$ answers: how many i.i.d.\ samples would produce the same estimation variance as this weighted sample? Nearly uniform weights give $\neff \approx n$ and a concentrated posterior; highly non-uniform weights give $\neff \ll n$ and a diffuse posterior, honestly communicating that few effective observations support the estimate.

\subsection{HPD Threshold via Monte Carlo Sampling}

The procedure (full pseudocode in Algorithm~\ref{alg:wbqcp}, Appendix~\ref{app:algorithm}) sorts scores and weights, computes $\neff$, sets $\alpha_i = \neff\cdot\tilde{w}_{(i)}$, draws $M$ Dirichlet samples to obtain a posterior $\{\lambda^{(m)}\}_{m=1}^{M}$ over the threshold, and returns the HPD quantile $\lambda_{\mathrm{HPD}} = Q_\beta(\{\lambda^{(m)}\})$ together with the posterior standard deviation $\sigma_{\mathrm{post}}$ (Figure~\ref{fig:wbcp_pipeline}). The final prediction interval is $\calC(X_{\mathrm{new}}) = [\hat{f}(X_{\mathrm{new}}) \pm \lambda_{\mathrm{HPD}}]$, and WBCP runs in $O(n \log n + nM)$ time per weight profile, with Dirichlet sampling that is embarrassingly parallel.

\subsection{Formalizing Meta-Uncertainty}
\label{sec:meta-uncertainty}

WBCP formalizes the two levels of uncertainty introduced in \S\ref{sec:intro}: \emph{predictive uncertainty} (the risk that $Y_{n+1}$ falls outside the interval) and \emph{meta-uncertainty} (the uncertainty about the threshold $\lambda$ itself, induced by the Dirichlet posterior over quantile spacings). The posterior summary $\sigma_{\mathrm{post}} = \sqrt{\Var(\lambda \mid \calD, \mathbf{w})}$ quantifies how much $\lambda$ could vary under a different calibration sample with the same weight profile, and Theorem~\ref{thm:concentration} establishes $\sigma_{\mathrm{post}} = O(1/\sqrt{\neff})$, linking the effective sample size directly to the reliability of the interval width. Formal definitions, an interpretive remark, and the explicit upper bound on $\sigma_{\mathrm{post}}^2$ are deferred to Appendix~\ref{app:meta-uncertainty}. Table~\ref{tab:comparison} summarizes the uncertainty outputs of each method: WBCP is the only one that simultaneously handles non-uniform weights and provides a posterior distribution over the threshold.

\begin{table}[h]
\centering
\small
\caption{Comparison of conformal prediction methods along two dimensions: covariate-shift robustness (non-uniform weights) and meta-uncertainty quantification (posterior over threshold $\lambda$). The proposed method (WBCP) is the only one that achieves both simultaneously.}
\label{tab:comparison}
\begin{tabular}{lcccc}
\toprule
\textbf{Method} & \textbf{Threshold Output} & \textbf{Posterior $p(\lambda \mid \calD)$} & \textbf{Non-uniform Weights} & \textbf{$\sigma_{\mathrm{post}}$ / $\neff$} \\
\midrule
Standard CP          & $\hat{q}$ (point)               & No           & No           & --- \\
Weighted CP          & $\hat{q}_w$ (point)             & No           & Yes          & --- \\
BQ-CP                & $\lambda_{\mathrm{HPD}}$        & Yes          & No           & $\sigma_{\mathrm{post}}$ only \\
\textbf{WBCP (ours)} & $\boldsymbol{\lambda_{\mathrm{HPD}}}$ & \textbf{Yes} & \textbf{Yes} & \textbf{Both} \\
\bottomrule
\end{tabular}
\end{table}

% ============================================================
\section{Theoretical Results}
\label{sec:theory}

We summarise four theoretical results under the proposed weighted Dirichlet spacing model. Their role is to motivate and characterise the behaviour of WBCP; the appendix provides proof sketches and first-order arguments rather than fully formalised derivations. Throughout, let $\rho_{(1)} \leq \cdots \leq \rho_{(n)}$ be sorted calibration scores with normalized weights $\tilde{w}_{(i)}$, cumulative weights $p_j = \sum_{i=1}^j \tilde{w}_{(i)}$, $q = 1-\alpha$, and weighted-quantile index $k^* = \min\{j: p_j \geq q\}$. Theorem~\ref{thm:calibration} identifies $\neff$ as the variance-matching concentration within the proposed weighted-Dirichlet family. Theorem~\ref{thm:concentration} gives the corresponding asymptotic concentration heuristic for the posterior width. Theorem~\ref{thm:dominance} extends BQ-CP's data-conditional logic under the proposed weighted spacing model, and Theorem~\ref{thm:coverage} describes the resulting high-$\neff$ conditional-coverage operating regime. Sketches and remarks are in Appendix~\ref{app:proofs}.

\begin{assumption}[Weighted Dirichlet Model]
\label{asmp:dirichlet}
$(V_1,\ldots,V_n) \sim \Dir(\neff\,\tilde{w}_1,\ldots,\neff\,\tilde{w}_n)$ over the calibration-score spacings. Under uniform weights this reduces to a rescaled $\Dir(1,\ldots,1)$ and recovers (a calibration-only restriction of) BQ-CP's spacing model up to an $O(1/n)$ correction.
\end{assumption}

\begin{theorem}[Calibration Consistency]
\label{thm:calibration}
For the WBCP Dirichlet $\boldsymbol{V}\sim\Dir(c\,\tilde{w}_1,\ldots,c\,\tilde{w}_{n+1})$ with concentration $c>0$: (i) $\E[\lambda_{\mathrm{post}}] = \lambda_{\mathrm{WCP}} + O(1/c)$ for any $c>0$ (mean consistency); (ii) $c=\neff$ is the asymptotically unique choice matching the Bayesian posterior variance $p_j(1-p_j)/(c+1)$ to the frequentist sampling variance $F(t)(1-F(t))/\neff$ of the weighted empirical CDF (variance calibration); (iii) under (ii), the posterior achieves the correct asymptotic variance for the weighted quantile estimator (BvM alignment).
\end{theorem}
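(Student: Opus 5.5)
The argument rests on the aggregation property of the Dirichlet. For $\boldsymbol{V}\sim\Dir(c\,\tilde{w}_1,\ldots,c\,\tilde{w}_n)$ with $\sum_i\tilde{w}_i=1$, the cumulative mass over sorted scores, $S_j=\sum_{i\le j}V_{(i)}$, is exactly $\Beta(c\,p_j,\;c(1-p_j))$. Hence
\[
\E[S_j]=p_j, \qquad \Var(S_j)=\frac{p_j(1-p_j)}{c+1}.
\]
Everything else follows from this marginal. The posterior threshold is the crossing
\[
\lambda_{\mathrm{post}}=\rho_{(J)}, \qquad J=\min\{j : S_j\ge q\},
\]
so $\Pr(\lambda_{\mathrm{post}}\le\rho_{(j)})=\Pr(S_j\ge q)$ is a Beta tail. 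The weighted CP threshold is $\lambda_{\mathrm{WCP}}=\rho_{(k^*)}$ by the definition of $k^*$.

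\textbf{Part (i): mean consistency.} I will write
\[
\E[\lambda_{\mathrm{post}}]=\sum_j \rho_{(j)}\bigl[\Pr(S_j\ge q)-\Pr(S_{j-1}\ge q)\bigr]
\]
and compare it with $\rho_{(k^*)}$. A Chebyshev bound from the variance formula controls the mass away from $k^*$:
\begin{itemize}[nosep]
\item for $j<k^*$, $\Pr(S_j\ge q)\le p_j(1-p_j)/\bigl((c+1)(q-p_j)^2\bigr)$;
\item the analogous bound holds for $j\ge k^*$, with $(p_j-q)^2$ in the denominator.
\end{itemize}
The $O(1/c)$ statement is therefore understood for fixed weights and scores, with $q$ not equal to any $p_j$, so that the gaps $|q-p_j|$ are bounded away from zero. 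The constant then depends on $\min_j|q-p_j|$ and on the score range. I will state this dependence explicitly rather than claim uniformity.

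\textbf{Part (ii): variance calibration.} On the frequentist side, I take the weighted empirical CDF $\hat F_w(t)=\sum_i\tilde{w}_i\mathbf 1[\rho_i\le t]$ with weights treated as fixed and scores i.i.d. from $F$. Its variance is
\[
\Var\bigl(\hat F_w(t)\bigr)=F(t)\bigl(1-F(t)\bigr)\sum_i\tilde{w}_i^2=\frac{F(t)\bigl(1-F(t)\bigr)}{\neff},
\]
directly from the definition of $\neff$ in \eqref{eq:neff}. Setting $p_j=F(t)$ and equating this with $p_j(1-p_j)/(c+1)$ gives $c+1=\neff$, so $c=\neff-1=\neff(1+O(1/\neff))$. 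The matching condition is a single linear equation in $c$, so the solution is unique. The discrepancy between $\neff-1$ and $\neff$ is what makes the choice $c=\neff$ only \emph{asymptotically} unique, with a relative error of $O(1/\neff)$.

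\textbf{Part (iii): BvM alignment.} I will apply the delta method for quantiles on both sides, assuming $F$ has a positive density $f$ at $\xi_q=F^{-1}(q)$.
\begin{itemize}[nosep]
\item \emph{Frequentist side.} Bahadur-type linearization for weighted quantiles gives $\Var(\hat q_w)\approx q(1-q)/(\neff f(\xi_q)^2)$. This requires a Lindeberg condition $\max_i\tilde{w}_i\to0$ for the CLT of $\hat F_w$.
\item \emph{Posterior side.} $S_j$ is asymptotically normal as $c\to\infty$, being a Beta with large parameters. Inverting the crossing event $\{S_j\ge q\}$ around $k^*$, with the scores locally spaced like $1/(\neff f)$ in $p$-units times $f$, gives
\[
\Var(\lambda_{\mathrm{post}})\approx \frac{q(1-q)}{(c+1)f(\xi_q)^2}.
\]
\end{itemize}
Substituting $c=\neff$ from (ii) makes the two variances agree to first order.

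\textbf{Main obstacle.} The hard step is the posterior-side inversion in (iii). $\lambda_{\mathrm{post}}$ is a discrete functional of the random staircase $S_\cdot$, and translating fluctuations of $S$ into fluctuations of $\lambda$ needs the empirical score distribution near $\xi_q$ to approximate $f$. I would handle this conditionally on a high-probability event where the weighted empirical CDF is uniformly close to $F$ (a DKW-type bound with $\neff$ in place of $n$), and then apply the delta method on that event. Consistent with the paper's stated scope, this remains a first-order sketch.
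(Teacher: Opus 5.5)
Your treatment of (ii) and (iii) follows the paper's route. Dirichlet aggregation gives $S_j\sim\Beta(cp_j,c(1-p_j))$ with variance $p_j(1-p_j)/(c+1)$. The Kish identity $\sum_i\tilde w_i^2=1/\neff$ gives the frequentist variance of $\hat F_w$. Exact matching occurs at $c=\neff-1$, which is exactly why $c=\neff$ is only \emph{asymptotically} unique. BvM alignment is then the CDF-to-quantile delta method, dividing both variances by $f(\xi_q)^2$.

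The paper's sketch gives no argument for (i). Your Chebyshev bounds on $\Pr(S_j\ge q)$ for $j<k^*$ and on $\Pr(S_j<q)$ for $j\ge k^*$ supply one, and the hypothesis you add is genuinely needed. Since $p_j<q$ for $j<k^*$ by definition, the only possible coincidence is $p_{k^*}=q$. In that case $\Pr(S_{k^*}<q)\to\tfrac12$ as $c\to\infty$, so $\E[\lambda_{\mathrm{post}}]$ generically tends to $\tfrac12(\rho_{(k^*)}+\rho_{(k^*+1)})$ rather than to $\lambda_{\mathrm{WCP}}$. The statement as written omits this condition. With a strict gap, Beta tail bounds even give an exponentially small error, so $O(1/c)$ is loose but correct. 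Your silent use of $n$ calibration components, instead of the $n+1$ written in the statement, is also the right reading. An $(n+1)$-th atom at $+\infty$ would make $\E[\lambda_{\mathrm{post}}]$ infinite for every $c$, since $\Pr(S_n<q)>0$.

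Two remarks in your plan for making (iii) rigorous need correcting, although both go beyond anything the paper attempts.

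\emph{The CLT condition.} Normality of $\sum_i\tilde w_i(\mathbf{1}[\rho_i\le t]-F(t))$ requires $\max_i\tilde w_i^2/\sum_j\tilde w_j^2=\neff\max_i\tilde w_i^2\to0$, not $\max_i\tilde w_i\to0$. The latter is automatic once $\neff\to\infty$, because $\max_i\tilde w_i^2\le 1/\neff$, and it does not give normality. For example, take one normalized weight equal to $n^{-1/3}$ and spread the rest uniformly: then $\neff\asymp n^{2/3}$, yet a single Bernoulli term carries essentially all the variance. The same condition, $\max_i\tilde w_i=o(\neff^{-1/2})$, is what makes the staircase jumps negligible against the posterior width $\sqrt{q(1-q)/(c+1)}$ on the posterior side.

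\emph{The inversion step.} An event of DKW type, $\sup_t|\hat F_w(t)-F(t)|\lesssim\neff^{-1/2}$, is too coarse. You need $p_j-q\approx f(\xi_q)(\rho_{(j)}-\hat q_w)$ on a window of width $\asymp\neff^{-1/2}$, and there the signal $f\cdot\neff^{-1/2}$ is of the same order as the DKW error. What works is the local oscillation modulus of the weighted empirical process: increments over intervals of length $\delta\asymp\neff^{-1/2}$ are $O_p(\sqrt{\delta/\neff})=O_p(\neff^{-3/4})$ up to logarithms. This is the same Bahadur-type ingredient you already invoke on the frequentist side.
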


\begin{theorem}[Posterior Concentration]
\label{thm:concentration}
Under Assumption~\ref{asmp:dirichlet},
$\sigma_{\mathrm{post}}^2 \leq C \cdot \tfrac{q^*(1-q^*)}{\neff} \cdot \tfrac{\bar{\Delta}_\rho^2}{\underline{w}^2}$,
where $q^*=\max(q(1-q),1/4)$, $\bar{\Delta}_\rho=\max_j|\rho_{(j+1)}-\rho_{(j)}|$, $\underline{w}=\min_{j\approx k^*}\tilde{w}_{(j)}$, and $C>0$ is universal; in particular $\sigma_{\mathrm{post}}=O(1/\sqrt{\neff})$.
\end{theorem}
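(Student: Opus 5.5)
The plan is to reduce the threshold posterior to a statement about the random cumulative weights at the indices near $k^*$, and then convert fluctuations in those cumulative weights into fluctuations in $\lambda$ through the spacings of the sorted scores.

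Under Assumption~\ref{asmp:dirichlet}, write $P_j = \sum_{i\le j} V_i$ for the random cumulative weights. By the aggregation property of the Dirichlet, $P_j \sim \Beta(\neff p_j,\, \neff(1-p_j))$. In particular,
\[
\E[P_j]=p_j, \qquad \Var(P_j)=\frac{p_j(1-p_j)}{\neff+1}.
\]
This is the same variance formula already used in Theorem~\ref{thm:calibration}(ii). The posterior threshold is $\lambda = \rho_{(K)}$ with $K=\min\{j : P_j \ge q\}$, a random index.

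The first key step is to bound how far $K$ strays from $k^*$. Whenever $K \ge k^*+m$, we have $P_{k^*+m-1} < q \le p_{k^*+m-1}$, so the deviation $p_{k^*+m-1}-P_{k^*+m-1}$ exceeds $p_{k^*+m-1}-q$. Since $p_{k^*+m-1}-q \ge (m-1)\underline{w}$, Chebyshev's inequality gives
\[
\Pr(K \ge k^*+m) \le \frac{p(1-p)}{(\neff+1)\,(m-1)^2\underline{w}^2},
\]
and symmetrically for $K \le k^*-m$. To apply this cleanly I would use the bound $p_j(1-p_j)\le q^*$ uniformly over the relevant window; the definition $q^*=\max(q(1-q),1/4)$ is exactly what makes this hold, at the cost of an absolute constant.

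The second step converts index deviations into threshold deviations. I would write $|\lambda - \rho_{(k^*)}| \le |K-k^*|\,\bar{\Delta}_\rho$. Then
\[
\sigma_{\mathrm{post}}^2 \le \E\bigl[(\lambda-\rho_{(k^*)})^2\bigr] \le \bar{\Delta}_\rho^2\,\E\bigl[(K-k^*)^2\bigr].
\]
To finish, I bound $\E[(K-k^*)^2]=\sum_m (2m-1)\Pr(|K-k^*|\ge m)$. I would control this sum with a fourth-moment (Beta) tail bound instead of Chebyshev, because Chebyshev's $1/m^2$ tail makes $\sum_m m\cdot m^{-2}$ diverge logarithmically.

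The cleaner alternative for this step is to note that the event $|K-k^*|\ge m$ forces $|P_j - p_j| \ge (m-1)\underline{w}$ for some $j$ near $k^*$. This gives $|K-k^*| \le 1 + \sup_j |P_j-p_j|/\underline{w}$, and hence
\[
\E[(K-k^*)^2] \lesssim 1 + \frac{\E\bigl[\sup_j (P_j-p_j)^2\bigr]}{\underline{w}^2}.
\]
The process $j\mapsto P_j - p_j$ is a Dirichlet bridge, which can be represented as normalized Gamma increments. A Doob or Kolmogorov maximal inequality for this martingale-like bridge then yields $\E[\sup_j (P_j-p_j)^2] \le C\,q^*/\neff$.

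Combining the two steps gives
\[
\sigma_{\mathrm{post}}^2 \le C\,\frac{q^*}{\neff}\,\frac{\bar{\Delta}_\rho^2}{\underline{w}^2},
\]
plus an additive $\bar{\Delta}_\rho^2$ term coming from the "$+1$". That additive term is the main obstacle. I would handle it by replacing $\rho_{(k^*)}$ with the posterior mean when centering, so that the variance rather than the second moment around a fixed point is controlled, and by treating the lattice term as lower order in the regime $\neff \underline{w}^2 \lesssim q^*$ where the bound is informative. Making the constant uniform, and justifying the use of $\underline{w}$ only for indices near $k^*$ (say by truncating the sum to a window and bounding the complement crudely via $\rho_{(n)}-\rho_{(1)}$ with an exponential Beta tail), is where the real care lies.

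The rate $\sigma_{\mathrm{post}}=O(1/\sqrt{\neff})$ then follows by treating $\bar{\Delta}_\rho/\underline{w}$ as $O(1)$, which is consistent with the heuristic, first-order framing the paper adopts.
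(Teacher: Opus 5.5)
Your proposal follows essentially the same route as the paper's sketch: Dirichlet aggregation to Beta marginals for the cumulative weights, tail bounds keeping the crossing index $K$ near $k^*$, conversion of index deviations into score deviations through $\underline{w}$ and $\bar{\Delta}_\rho$, and a layer-cake summation. The only difference is that the paper invokes sub-Gaussian Beta tails (Marchal--Arbel) where you use a fourth-moment or Doob-type maximal bound on the Dirichlet bridge, which is immaterial.

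The additive $\bar{\Delta}_\rho^2$ term you single out as the main obstacle cannot be removed by recentering, since the posterior can genuinely split between two adjacent scores. It does not need to be removed: $\underline{w}\le\max_i\tilde w_i\le(\sum_i\tilde w_i^2)^{1/2}=\neff^{-1/2}$ gives $\neff\underline{w}^2\le 1$ always, so the term is absorbed into $C$. Likewise $q^*=\max(q(1-q),1/4)=1/4$ identically, so $q^*(1-q^*)$ is just a constant.
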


\begin{theorem}[Weighted Stochastic Dominance, extending BQ-CP Thm.~4.3]
\label{thm:dominance}
Under Assumption~\ref{asmp:dirichlet}, $\inf_\pi \Pr(L\leq b\mid\ell_{1:n}) \geq \Pr(L_w^+ \leq b)$ for any $b\leq B$, with $L_w^+ = \sum_{i=1}^{n} V_i\,\ell_{(i)}$ (calibration spacings). The associated HPD threshold satisfies $\inf_\pi \Pr(L(\lambda_{\mathrm{HPD}})\leq\alpha\mid\ell_{1:n})\geq\beta$ (Corollary, data-conditional guarantee, in the high-$\neff$ regime).
\end{theorem}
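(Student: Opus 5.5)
The proof follows the template of BQ-CP's Theorem~4.3 \citep{snell2025conformal}, with the i.i.d.\ spacing law replaced by the weighted spacing law of Assumption~\ref{asmp:dirichlet}.

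\emph{Step 1 (representing the loss).} Fix an unknown test-time loss distribution $\pi$ with CDF $F$, and a threshold. Write the expected loss as a quadrature over the quantile function,
\[
L = \int_0^1 F^{-1}(u)\,du .
\]
Let $t_{(i)} = F(\ell_{(i)})$ be the probability-integral-transformed ordered calibration losses, and let $V_i = t_{(i)} - t_{(i-1)}$ be the induced spacings. Since $F^{-1}$ is nondecreasing and bounded by $B$, the quantile function on $(t_{(i-1)}, t_{(i)}]$ is at most $\ell_{(i)}$. This gives the pointwise bound
\[
L \le \sum_{i} V_i\,\ell_{(i)} ,
\]
where the last cell carries $B$. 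Because $b \le B$, I would fold that tail term into the $\ell_{(n)}$ cell using the calibration-only restriction allowed by Assumption~\ref{asmp:dirichlet}. This yields $L \le L_w^+$ realization by realization, for every $\pi$.

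\emph{Step 2 (distributional step).} Conditional on $\ell_{1:n}$ and under the weighted spacing model, take $(V_1,\ldots,V_n) \sim \Dir(\neff\tilde w_1,\ldots,\neff\tilde w_n)$, independently of the specific $\pi$. This is exactly where Assumption~\ref{asmp:dirichlet} replaces BQ-CP's Lemma~4.2. Then for every $\pi$,
\[
\Pr(L \le b \mid \ell_{1:n}) \;\ge\; \Pr(L_w^+ \le b),
\]
and taking the infimum over $\pi$ gives the theorem.

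\emph{Step 3 (the corollary).} Let $\lambda_{\mathrm{HPD}}$ be chosen so that the Monte Carlo $\beta$-quantile satisfies $\Pr\bigl(L_w^+(\lambda_{\mathrm{HPD}}) \le \alpha\bigr) \ge \beta$. For miscoverage losses $\ell_i(\lambda) = \mathbf 1[\rho_i > \lambda]$, $L_w^+(\lambda)$ is a weighted Dirichlet tail mass, so this condition is equivalent to the posterior crossing statement used in the algorithm. Applying Step~2 with $b = \alpha$ then gives the claim. The high-$\neff$ qualifier accounts for Monte Carlo and $O(1/\neff)$ discretization error, for instance via Theorem~\ref{thm:calibration}(i).

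\emph{Main obstacle.} The delicate point is Step~2. Under weights, the spacings $F(\ell_{(i)})$ are not genuinely Dirichlet distributed, so the independence from $\pi$ and the exact law are imposed by Assumption~\ref{asmp:dirichlet} rather than derived. I would state this explicitly and argue only conditional on the model. A secondary issue is that the $n+1$-th cell, which carries the mass at $B$, is dropped. I would handle it either by showing its posterior mass is $O(1/\neff)$ through the Dirichlet mean, or by restricting the comparison to $b \le B$ as in the statement.
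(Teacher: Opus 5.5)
Your overall route is the paper's. The worst-case non-decreasing reconstruction (BQ-CP Prop.~B.2) bounds $L$ by a spacing-weighted sum of ordered losses using monotonicity alone. Assumption~\ref{asmp:dirichlet} then takes the place of BQ-CP's Lemma~4.2, and the spacings are marginalised. You are also right that the weighted Dirichlet law is assumed, not derived.

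\textbf{The gap: passing from BQ-CP's $(n+1)$-cell bound to the $n$-cell $L_w^+$.} The reconstruction gives
\[
L \;\le\; \sum_{i=1}^{n}\bigl(t_{(i)}-t_{(i-1)}\bigr)\,\ell_{(i)} \;+\; \bigl(1-t_{(n)}\bigr)\,B .
\]
Folding the tail mass $1-t_{(n)}$ into the $\ell_{(n)}$ cell replaces $B$ by $\ell_{(n)}\le B$. This \emph{lowers} the right-hand side. If $t_{(n)}<1$ and $\ell_{(n)}<B$, a $\pi$ placing its remaining mass at $B$ makes $L$ exceed the folded sum. So ``$L\le L_w^+$ realization by realization, for every $\pi$'' is false.

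Neither of your remedies closes this. The condition $b\le B$ has no bearing on a pointwise bound on $L$: for $b\ge B$ both probabilities equal $1$, so the restriction only discards a trivial range. Hence neither the fold nor ``restricting to $b\le B$'' touches the dropped cell. Your other remedy, an $O(1/\neff)$ bound on the tail mass, would give at best an approximate inequality, not the exact one stated. Moreover, under an $n$-component Dirichlet that tail cell has no law to bound.

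Step~1 is also internally inconsistent. Your $V_i=t_{(i)}-t_{(i-1)}$, $i\le n$, sum to $t_{(n)}$. They can therefore be $\Dir$-distributed on the simplex only if $t_{(n)}=1$, and then there is nothing to fold.

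\textbf{The repair is the paper's convention.} The calibration-only restriction is built into Assumption~\ref{asmp:dirichlet} itself, which describes itself as a calibration-only restriction of BQ-CP's spacing model up to an $O(1/n)$ correction. That is, the model takes the $n$ calibration spacings to exhaust $[0,1]$. The $B$-cell is then empty, and Prop.~B.2 gives exactly $L\le\sum_{i=1}^{n}V_i\ell_{(i)}$. You should state this as part of the assumption rather than try to derive it. If you want an honest bound instead, keep an $(n+1)$-th component carrying $B$, as in the $\Dir(c\tilde w_1,\ldots,c\tilde w_{n+1})$ of Theorem~\ref{thm:calibration}; this changes the definition of $L_w^+$.

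\textbf{Step~3 is correct, and more explicit than the paper.} For miscoverage losses, $L_w^+(\lambda)\le\alpha$ holds iff the crossing score $\rho_{(K)}$ of the same Dirichlet draw is at most $\lambda$. So the exact posterior $\beta$-quantile gives the corollary exactly under the model. The only approximations are Monte Carlo error and, relative to BQ-CP, the dropped cell. Theorem~\ref{thm:calibration}(i) concerns the posterior mean and is not what the high-$\neff$ qualifier needs.
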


\begin{assumption}[Correct Weight Specification]
\label{asmp:correct}
$w_i \propto dP_{\mathrm{test}}/dP_{\mathrm{calib}}(X_i)$ (likelihood-ratio condition of~\citealp{tibshirani2019conformal}); the local score distribution has a continuous positive density at the quantile.
\end{assumption}

\begin{theorem}[Conditional Coverage, large-$\neff$ regime]
\label{thm:coverage}
Under Assumptions~\ref{asmp:dirichlet}--\ref{asmp:correct}, when $\neff$ is large enough that the BvM Gaussian approximation holds and $\Phi^{-1}(\beta)\sqrt{q(1-q)/\neff}<\alpha$, the expected conditional miscoverage at $\beta>1/2$ obeys
\begin{equation}
  \E_{\calD}\!\bigl[\alpha_{\mathrm{cond}}\bigr] \leq \max\!\Bigl\{0,\; \alpha - \Phi^{-1}(\beta)\sqrt{\tfrac{q(1-q)}{\neff}}\Bigr\} + o\!\bigl(1/\sqrt{\neff}\bigr).
  \label{eq:coverage}
\end{equation}
Outside this regime the BvM approximation breaks down and the bound saturates trivially at $\E[\alpha_{\mathrm{cond}}]\geq 0$.
\end{theorem}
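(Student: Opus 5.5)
The plan is to prove Theorem~\ref{thm:coverage} in three steps: a Bahadur-type linearisation of the weighted quantile, a Bernstein--von Mises (BvM) expansion of the HPD threshold, and a delta-method step that converts the extra threshold margin into reduced miscoverage.

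\textbf{Step 1: linearise the weighted quantile.} Let $F_w$ be the local (test-reweighted) score CDF. By Assumption~\ref{asmp:correct}, with $w_i \propto dP_{\mathrm{test}}/dP_{\mathrm{calib}}(X_i)$, its $q$-quantile $\lambda^*$ is the oracle threshold with conditional miscoverage exactly $\alpha$. Assumption~\ref{asmp:correct} also gives a continuous density $f(\lambda^*)>0$. A weighted Bahadur representation then yields
\[
\hat{q}_w-\lambda^* = -\frac{\hat F_w(\lambda^*)-q}{f(\lambda^*)}+o_p(1/\sqrt{\neff}).
\]
Here $\hat F_w$ is the weighted empirical CDF, whose variance is $q(1-q)/\neff$ by the Kish computation behind Theorem~\ref{thm:calibration}(ii). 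In particular $\E_\calD[\hat q_w]=\lambda^*+o(1/\sqrt{\neff})$.

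\textbf{Step 2: BvM expansion of the HPD threshold.} By Theorem~\ref{thm:calibration}(i), the posterior mean of $\lambda$ is $\hat q_w+O(1/\neff)$. By Theorem~\ref{thm:calibration}(iii), the posterior of $\lambda$ is asymptotically Gaussian. The Dirichlet marginal of the cumulative spacing at index $k^*$ is $\Beta(\neff p_{k^*},\neff(1-p_{k^*}))$, with variance $p_{k^*}(1-p_{k^*})/(\neff+1)$. Pushing this through the inverse CDF gives
\[
\lambda_{\mathrm{HPD}}=\hat q_w+\Phi^{-1}(\beta)\,\sigma_{\mathrm{post}}+o_p(1/\sqrt{\neff}),\qquad \sigma_{\mathrm{post}}=\frac{\sqrt{q(1-q)/\neff}}{f(\lambda^*)}(1+o(1)).
\]
Theorem~\ref{thm:concentration} controls the size of $\sigma_{\mathrm{post}}$ and justifies discarding higher-order terms.

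\textbf{Step 3: convert margin into miscoverage.} The conditional miscoverage is $\alpha_{\mathrm{cond}}=1-F_w(\lambda_{\mathrm{HPD}})$. Taylor-expanding around $\lambda^*$ gives
\[
\alpha_{\mathrm{cond}}=\alpha-f(\lambda^*)(\lambda_{\mathrm{HPD}}-\lambda^*)+o_p(1/\sqrt{\neff}).
\]
Substituting Steps 1--2 and taking $\E_\calD$, the zero-mean Bahadur term drops out. The factor $f(\lambda^*)$ cancels against $\sigma_{\mathrm{post}}$, leaving
\[
\E[\alpha_{\mathrm{cond}}]\le\alpha-\Phi^{-1}(\beta)\sqrt{q(1-q)/\neff}+o(1/\sqrt{\neff}).
\]
The condition $\Phi^{-1}(\beta)\sqrt{q(1-q)/\neff}<\alpha$ keeps this expansion inside the regime where the bound is nonnegative and the linearisation stays local. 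Since $\alpha_{\mathrm{cond}}\ge 0$ always, taking the maximum with $0$ gives \eqref{eq:coverage}. The saturation remark outside the regime is then immediate from $\alpha_{\mathrm{cond}}\ge 0$.

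\textbf{Main obstacle.} The hardest step is turning the $o_p$ remainders into $o(1/\sqrt{\neff})$ remainders \emph{in expectation}. The Taylor remainder and the BvM error must be uniformly integrable, and the weighted Bahadur representation must hold with the random $\neff$ and the non-identical weights. I would first handle this using boundedness: $\alpha_{\mathrm{cond}}\in[0,1]$, so $o_p$ terms combined with boundedness give expectation control via dominated convergence on the scaled remainders. For the Bahadur step I would invoke a Lindeberg condition on $\max_i\tilde w_i\sqrt{\neff}\to 0$. If that fails, I would state the theorem under this extra regularity, which is consistent with the paper's first-order, high-$\neff$ framing.
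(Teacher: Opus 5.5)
Your argument follows the paper's own sketch: a CLT/Bahadur expansion of the weighted quantile, the BvM shift $\lambda_{\mathrm{HPD}}=\hat q_w+\Phi^{-1}(\beta)\,\sigma_{\mathrm{post}}+o_p(1/\sqrt{\neff})$, and a first-order conversion of that upward shift into reduced expected miscoverage, clipped at $0$; your Step~3 makes explicit the cancellation of $f(\lambda^*)$ that the paper leaves schematic. One caution on your ``main obstacle'' paragraph, which goes beyond what the paper attempts: boundedness of $\alpha_{\mathrm{cond}}$ does not dominate the $\sqrt{\neff}$-scaled remainders or control the possibly unbounded term $\lambda_{\mathrm{HPD}}-\lambda^*$, so upgrading $o_p$ to $o$ in expectation needs exponential tail bounds (e.g.\ Hoeffding with $\sum_i\tilde w_i^2=1/\neff$, plus Beta tails for the posterior) showing the bad events have probability $o(1/\sqrt{\neff})$.
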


\textbf{In summary}: Theorem~\ref{thm:calibration} singles out $\neff$ as the asymptotically unique calibration-consistent concentration; Theorem~\ref{thm:concentration} controls the per-location posterior width; Theorem~\ref{thm:dominance} carries BQ-CP's data-conditional guarantee into the weighted setting; Theorem~\ref{thm:coverage} gives an $O(1/\sqrt{\neff})$ improvement in expected miscoverage when BvM applies. Detailed remarks per result, full proofs, and the small-$\neff$ analysis are in Appendix~\ref{app:proofs}.

% ============================================================
\section{The Geographic WBCP: AdaGeoBCP}
\label{sec:geo}

We propose \textbf{AdaGeoBCP}---adaptive-bandwidth Geographical Bayesian Conformal Prediction---as the spatial instantiation of WBCP. For a test location $s$ and calibration locations $\{s_i\}_{i=1}^n$, AdaGeoBCP uses Gaussian kernel weights with a \emph{per-location adaptive bandwidth}:
\begin{equation}
  w_i(s) \;=\; \exp\!\Bigl(-\tfrac{\|s-s_i\|^2}{2\,\bw(s)^2}\Bigr),
  \qquad
  \bw(s) \;=\; \bw_0 \cdot \mathrm{median}\bigl(\|s-s_{(1)}\|,\,\ldots,\,\|s-s_{(k)}\|\bigr),
  \label{eq:adabw}
\end{equation}
where $s_{(1)},\ldots,s_{(k)}$ are the $k$ nearest calibration points to $s$. The adaptive bandwidth is the \emph{default} component of AdaGeoBCP, not an emergency fix: it (i)~automatically tracks local calibration density---narrow in dense regions for tight intervals, wide in sparse regions to maintain $\neff(s)$ above zero (Proposition~\ref{prop:singularity} in Appendix~\ref{app:adaptive}); (ii)~delivers the local conditional coverage of weighted CP~\citep{tibshirani2019conformal} since adaptive kernel widths still satisfy weighted exchangeability; and (iii)~empirically maintains target coverage where fixed-bandwidth alternatives systematically fail (\S\ref{sec:experiments}). Plugging Eq.~\eqref{eq:adabw} into Algorithm~\ref{alg:wbqcp} gives a per-location threshold posterior $p(\lambda \mid \calD, s)$, from which three spatial diagnostics no existing method provides emerge: an effective sample size map $\neff(s)$, a posterior std map $\sigma_{\mathrm{post}}(s)$, and a multi-resolution confidence-layer view.

A $2\times3$ ablation over weighting scheme (uniform / fixed-bandwidth / adaptive-bandwidth) crossed with posterior type (frequentist / Bayesian) confirms that both ingredients of AdaGeoBCP are necessary; full numerical results are deferred to Appendix~\ref{app:ablation}.

% ============================================================
\section{Experiments}
\label{sec:experiments}

\begin{figure}[!t]
    \centering
    \includegraphics[width=0.62\linewidth]{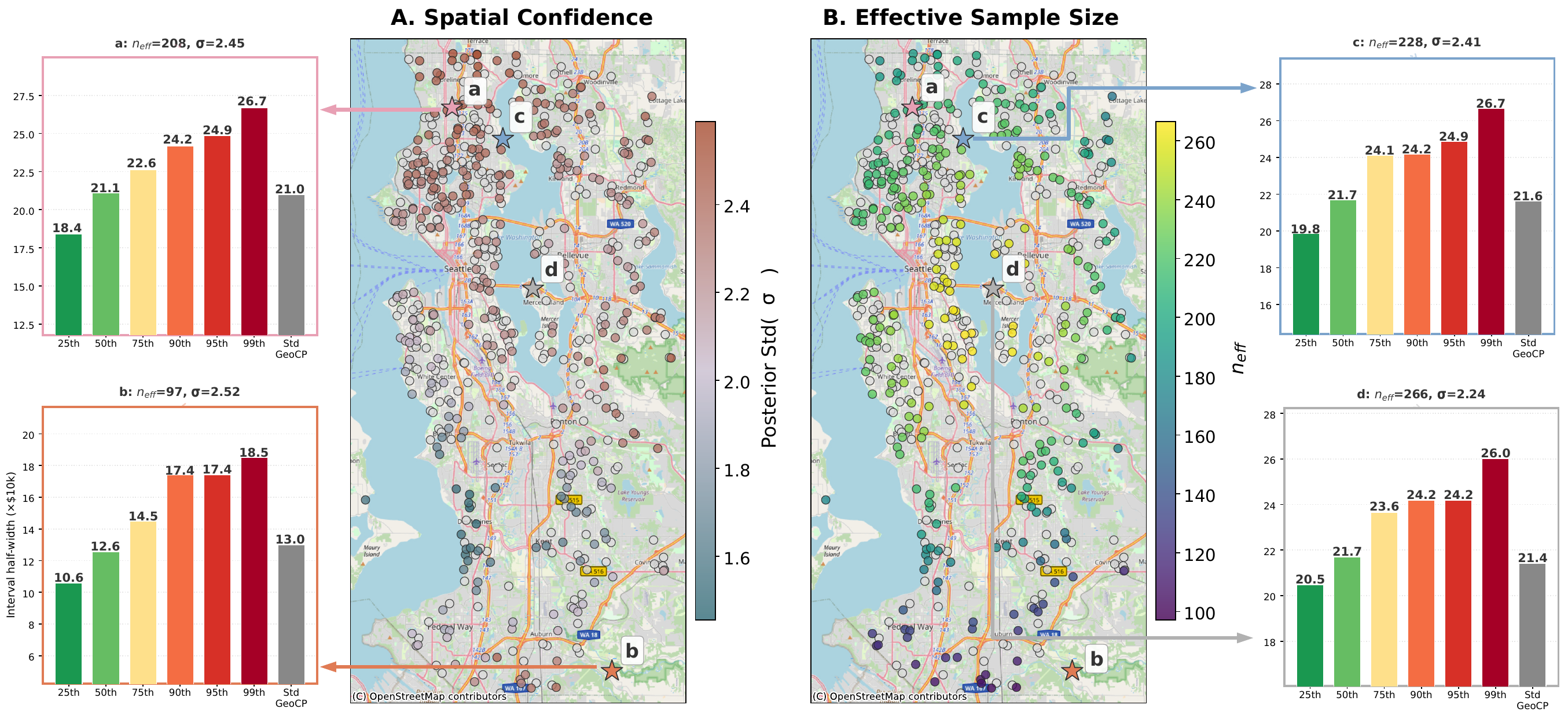}
    \caption{AdaGeoBCP per-location diagnostics (Seattle), illustrating the spatial variation reported in Table~\ref{tab:cross_dataset}: (A)~$\sigma_{\mathrm{post}}$ map; (B)~$\neff$ map. Side panels show posterior thresholds at four locations vs.\ the GeoCP point estimate (dashed). Urban-center locations have $\neff>240$, $\sigma_{\mathrm{post}}<1.8$ (reliable); peripheral $\neff<120$, $\sigma_{\mathrm{post}}>2.4$ (uncertain)---a per-location signal absent from Standard CP and GeoCP.}
    \label{fig:geobcp_map_posteriors}
\end{figure}

We compare AdaGeoBCP (proposed) against three headline baselines---Standard CP, BQ-CP (i.i.d.\ Bayesian), GeoCP (frequentist spatial)---on a synthetic Gaussian-random-field benchmark and on $13$ real-world geospatial datasets. All Bayesian variants use $\beta = 0.9$ and $M = 1000$ MC samples; target miscoverage $\alpha = 0.1$. Two ablation variants, a $\beta$-sweep frontier, and a 50-split Seattle bootstrap are reported in Appendices~\ref{app:ablation}--\ref{app:beta_sweep}.

\textbf{Setup.} The synthetic experiment samples $n=3000$ locations on $[0,20]^2$ with mean $y(s) = \sin(s_X/3) + \cos(s_Y) + 0.1\,s_X$ and a Gaussian-random-field noise component at length scale $\ell=2.0$, region-dependent amplitude (low/medium/high in three subregions), and a 50/50 calibration/test split; full specification in Appendix~\ref{sec:simulation_study}. The real-world experiment uses $13$ benchmarks across housing (Seattle, $n=3000$), hydrology (CAMELS, $n=670$), forestry (FIA, sub-sampled to $n=3000$), US county-level voting ($n=3108$), ERA5 climate (sub-sampled $n=3000$), and 8 county-level public-health prevalence indicators (CDC PLACES, $n\!\approx\!2350$ each). The base predictor $\hat f$ is an XGBoost regressor ($n_{\mathrm{est}}\,{=}\,500$, depth $3$); the conformity score is $\rho_i = |\hat f(X_i) - Y_i|$; the train/calibration/test split is 80/10/10; per-dataset feature counts, target descriptions, sources, and base-model $R^2$ are tabulated in Appendix~\ref{app:dataset_details} (Table~\ref{tab:datasets}). Each real-world dataset is bootstrapped over $10$ random train/cal/test splits; Seattle is additionally evaluated under a $50$-split bootstrap (Appendix~\ref{sec:real_world_study}, Table~\ref{tab:seattle_50split}).

\textbf{Cross-dataset results.} Table~\ref{tab:cross_dataset} consolidates synthetic and real-world results. AdaGeoBCP attains the target coverage on both blocks: $0.983$ on the synthetic GRF benchmark (single seed) and bootstrap mean $0.925 \pm 0.009$ across the $13$ real-world datasets, \emph{tying} the i.i.d.\ Bayesian baseline BQ-CP ($0.928$) on the real-world block. The two non-Bayesian shift-handling baselines (GeoCP and the AdaGeoCP ablation cell) under-cover ($0.824$ and $0.895$ mean coverage), confirming that bandwidth-localized weighted quantiles are biased without the Bayesian overlay; the full $6$-method numerical breakdown is in Appendix~\ref{app:ablation} (Tables~\ref{tab:synthetic} and~\ref{tab:cross_dataset_full}).

\textbf{Matched coverage \emph{plus} spatially varying uncertainty---the headline.} The right two columns of Table~\ref{tab:cross_dataset} report the across-locations standard deviation of $\neff$ and $\sigma_{\mathrm{post}}$ \emph{within} each test set. For BQ-CP these are $0.0$ and $0.02$ (constant by construction; the residual is MC noise), because uniform weights yield the same posterior at every test location. For AdaGeoBCP they are $13.1$ and $0.95$: a genuine per-location signal that automatically flags weak calibration support (Figure~\ref{fig:geobcp_map_posteriors}). \emph{At the same coverage as the i.i.d.\ baseline, AdaGeoBCP additionally produces per-sample meta-uncertainty}---the structural advance of WBCP.

\textbf{Per-dataset behaviour.} Across the $13$ datasets, AdaGeoBCP is the only method whose bootstrap-CI for coverage contains or exceeds $1-\alpha=0.9$ on every dataset (Figure~\ref{fig:bootstrap_matrix}, Appendix~\ref{sec:real_world_study}). The fixed-bandwidth ablation cells (GeoCP, GeoBCP) systematically under-cover on continental-scale data (Climate, Forest, county-level Health), where the auto-bandwidth from $20$-NN distance gives an effective sample size of only $\approx7$ per test location---too few to deliver an unbiased weighted quantile, and the Bayesian overlay alone (GeoBCP) does not repair this bias. Adaptive bandwidth widens the kernel where calibration density is low (raising mean $\neff$ from $23$ to $171$) and narrows it where density is high, simultaneously restoring coverage and producing the spatial $\neff$ variation seen in Figure~\ref{fig:geobcp_map_posteriors}.

\textbf{Ablation read-out.} The $2\times3$ grid (weighting scheme $\times$ posterior type, Appendix~\ref{app:ablation}) cleanly separates the contributions: removing the Bayesian overlay from AdaGeoBCP (i.e., AdaGeoCP) drops cross-dataset target attainment from $13/13$ to $3/13$; removing adaptive bandwidth (i.e., GeoBCP) drops it to $1/13$. Both ingredients are required, and AdaGeoBCP is the only configuration in the grid that simultaneously delivers (i)~distribution-shift handling, (ii)~target coverage, and (iii)~spatially varying meta-uncertainty.

\textbf{Behaviour at $\beta=0.5$.} A $\beta$-sweep on Seattle (Appendix~\ref{app:beta_sweep}) shows that BQ-CP and GeoBCP approach their corresponding point-estimate baselines at $\beta=0.5$, while AdaGeoBCP remains noticeably different from AdaGeoCP in the small-$\neff$ regime where the posterior is markedly diffuse. We therefore interpret $\beta$ as a practical operating-point control on a common posterior rather than as a mechanism for exact recovery of weighted CP; the $\sigma_{\mathrm{post}}/\neff$ diagnostics are properties of the posterior, invariant to $\beta$.

\begin{table}[ht]
\centering
\caption{Combined experimental summary on the synthetic GRF benchmark and the 13 real-world geospatial datasets ($\alpha=0.1$, $\beta=0.9$). \emph{Synthetic} block: single GRF dataset at $\ell=2.0$ (single seed). \emph{Real-world} block: mean $\pm$ std across the $13$ datasets ($10$ random splits per dataset); the right two columns report ``mean / loc-std'' (across-locations mean and across-locations std). \textbf{Key differentiator}: at matched coverage AdaGeoBCP yields spatially \emph{varying} $\neff$, $\sigma_{\mathrm{post}}$ across test locations (loc-std $13.1$, $0.95$), whereas BQ-CP is \emph{constant} across locations (loc-std $0.0$, $0.02$, the latter being only MC noise). Two further ablation cells (fixed-bandwidth GeoBCP, frequentist AdaGeoCP) and the full $6$-method tables are deferred to Appendix~\ref{app:ablation}.}
\label{tab:cross_dataset}
\small
\setlength{\tabcolsep}{3pt}
\begin{tabular}{lccc}
\toprule
\textbf{Method} & \textbf{Coverage} & $\boldsymbol{\neff}$ \textbf{(mean / loc-std)} & $\boldsymbol{\sigma_{\mathrm{post}}}$ \textbf{(mean / loc-std)} \\
\midrule
\multicolumn{4}{l}{\textit{(a) Synthetic spatial benchmark} ($n=3000$ GRF, $\ell=2.0$, single seed)} \\
Standard CP                       & $0.9047$            & ---                 & ---                 \\
BQ-CP                             & $0.9187$            & $1500$ / $0.0$      & $0.095$ / $0.0$     \\
GeoCP                             & $0.9647$            & ---                 & ---                 \\
\textbf{AdaGeoBCP (ours)}         & $\mathbf{0.9827}$   & $10$ / $\mathbf{4}$  & $0.108$ / $\mathbf{0.04}$ \\
\midrule
\multicolumn{4}{l}{\textit{(b) Real-world: 13 geospatial datasets}, mean $\pm$ std across datasets} \\
Standard CP                       & $0.903 \pm 0.009$           & ---                       & ---                       \\
BQ-CP                             & $0.928 \pm 0.011$           & $242.9$ / $0.0$           & $0.58$ / $0.02$           \\
GeoCP                             & $0.824 \pm 0.034$           & ---                       & ---                       \\
\textbf{AdaGeoBCP (ours)}         & $\mathbf{0.925 \pm 0.009}$  & $171.3$ / $\mathbf{13.1}$ & $1.19$ / $\mathbf{0.95}$  \\
\bottomrule
\end{tabular}
\end{table}

% ============================================================
\section{Discussion and Conclusion}
\label{sec:conclusion}

We introduced \textbf{Weighted Bayesian Conformal Prediction (WBCP)}, a unified framework that lifts BQ-CP's~\citep{snell2025conformal} Bayesian threshold posterior to importance-weighted conformal procedures by replacing the i.i.d.\ Dirichlet with a weighted Dirichlet scaled by Kish's $\neff$. Our analysis offers model-based and asymptotic support: $\neff$ is the variance-matching concentration within the weighted-bootstrap family, the posterior width decreases with $\neff$, and in the high-$\neff$ regime the HPD threshold yields a more conservative conditional-coverage operating point.

The spatial instantiation \textbf{AdaGeoBCP} attains coverage comparable to the i.i.d.\ baseline on $13/13$ datasets ($0.925$ vs.\ $0.928$) while returning per-location $\sigma_{\mathrm{post}}$/$\neff$ that vary by an order of magnitude across space (loc-std $0.95$/$13.1$ vs.\ $0.02$/$0.0$)---a reliability signal the baseline cannot supply (limitations: Appendix~\ref{app:discussion}).

% ============================================================
% Acknowledgments (hidden in anonymous submission via the ack environment)
% ============================================================
\begin{ack}
% This block is automatically suppressed in anonymous submission mode.
% Add funding and competing-interest declarations here in the final version.
\end{ack}

% ============================================================
% References
% ============================================================
\bibliographystyle{plainnat}
\bibliography{02_references}

% ============================================================
% APPENDIX
% ============================================================
\newpage
\appendix

\section{Algorithm and Formal Meta-Uncertainty Definitions}
\label{app:algorithm}

\subsection{WBCP Pseudocode}

Algorithm~\ref{alg:wbqcp} provides the full pseudocode for the WBCP procedure summarized in \S\ref{sec:method}.

\begin{algorithm}[h]
\caption{WBCP: Weighted Bayesian Conformal Prediction}
\label{alg:wbqcp}
\begin{algorithmic}[1]
\REQUIRE Calibration scores $\{\rho_i\}_{i=1}^{n}$, weights $\{w_i\}_{i=1}^{n}$, quantile $q = 1-\alpha$, MC samples $M$, confidence $\beta$
\STATE Sort scores: $\rho_{(1)} \leq \cdots \leq \rho_{(n)}$; sort weights accordingly: $\tilde{w}_{(i)}$
\STATE Compute $\neff$ via Eq.~\eqref{eq:neff}; set $\alpha_i = \neff \cdot \tilde{w}_{(i)}$ for $i = 1, \ldots, n$
\FOR{$m = 1$ \TO $M$}
  \STATE Sample $\boldsymbol{u}^{(m)} \sim \Dir(\alpha_1, \ldots, \alpha_n)$
  \STATE Find threshold: $\lambda^{(m)} = \rho_{(k)}$ where $k = \min\{j : \sum_{i=1}^{j} u^{(m)}_{(i)} \geq q\}$
\ENDFOR
\STATE HPD threshold: $\lambda_{\mathrm{HPD}} = Q_\beta(\{\lambda^{(m)}\}_{m=1}^M)$ \hfill {\color{gray}\small ($\beta$-quantile)}
\STATE Posterior diagnostics: $\sigma_{\mathrm{post}} = \mathrm{std}(\{\lambda^{(m)}\})$
\RETURN $\lambda_{\mathrm{HPD}}$, $\sigma_{\mathrm{post}}$, $\neff$, $\{\lambda^{(m)}\}_{m=1}^M$
\end{algorithmic}
\end{algorithm}

\subsection{Formal Definitions of Meta-Uncertainty}
\label{app:meta-uncertainty}

\begin{definition}[Two Levels of Uncertainty]
\label{def:two-levels}
Given a prediction model $\hat{f}$ and calibration data $\calD$:
\begin{enumerate}[nosep]
  \item \textbf{Predictive uncertainty} (Level~1): the risk that the true outcome falls outside the prediction interval. This is characterized by the coverage probability
  \begin{equation}
    \Pr\bigl(Y_{n+1} \in [\hat{f}(X_{n+1}) \pm \lambda]\bigr) \geq 1-\alpha.
    \label{eq:level1}
  \end{equation}
  Standard CP and weighted CP address this level by producing a \emph{single deterministic threshold} $\hat{q}$ (or $\hat{q}_w$) such that the resulting interval achieves the target coverage.

  \item \textbf{Meta-uncertainty} (Level~2): the uncertainty about the threshold $\lambda$ itself, arising from the finite calibration sample. WBCP provides a full posterior distribution over this threshold:
  \begin{equation}
    p(\lambda \mid \calD, \mathbf{w}), \quad \text{with summary statistics} \quad
    \bar{\lambda} = \E[\lambda \mid \calD, \mathbf{w}], \quad
    \sigma_{\mathrm{post}} = \sqrt{\Var(\lambda \mid \calD, \mathbf{w})}.
    \label{eq:level2}
  \end{equation}
  The posterior $p(\lambda \mid \calD, \mathbf{w})$ is induced by the Dirichlet model over quantile spacings (Eq.~\ref{eq:mapping}), and $\sigma_{\mathrm{post}}$ measures the \emph{reliability} of the interval width.
\end{enumerate}
\end{definition}

\begin{remark}[Interpreting $\sigma_{\mathrm{post}}$]
\label{rem:interpret}
The posterior standard deviation $\sigma_{\mathrm{post}}$ has a concrete interpretation: it quantifies how much the threshold $\lambda$ could vary under different realizations of the calibration data with the same weight profile $\mathbf{w}$. A small $\sigma_{\mathrm{post}}$ means the interval width is precisely determined by the calibration data; a large $\sigma_{\mathrm{post}}$ signals that the interval width is itself uncertain---a different calibration sample could have produced a substantially different interval. By Theorem~\ref{thm:concentration}, $\sigma_{\mathrm{post}} = O(1/\sqrt{\neff})$, establishing a direct quantitative link between the effective sample size and the reliability of the prediction interval:
\begin{equation}
  \sigma_{\mathrm{post}}^2 \leq C \cdot \frac{q^*(1-q^*)}{\neff} \cdot \frac{\bar{\Delta}_\rho^2}{\underline{w}^2}.
  \label{eq:sigma-neff}
\end{equation}
\end{remark}

\section{Proof Sketches and Additional Theoretical Remarks}
\label{app:proofs}

\emph{The appendix below is intended to document the main proof ideas and modelling assumptions used in the four theoretical results of \S\ref{sec:theory}. The arguments are presented as sketches that surface the structural intuition (Dirichlet aggregation $\to$ Beta concentration $\to$ delta-method CLT) rather than as fully formalised derivations; all results should be read as obtaining under the proposed weighted Dirichlet spacing model and, where indicated, in the high-$\neff$ asymptotic regime in which the Bernstein--von Mises Gaussian approximation is accurate.}

\subsection{Notation and Setup}

We use the notation from \S\ref{sec:theory}. Sorted calibration scores are $\rho_{(1)} \leq \cdots \leq \rho_{(n)}$; normalized calibration weights are $\tilde{w}_{(i)}$; cumulative weights are $p_j = \sum_{i=1}^j \tilde{w}_{(i)}$; the effective sample size is $\neff = (\sum_i w_i)^2 / \sum_i w_i^2$; the quantile level is $q = 1-\alpha$; and the deterministic weighted-CP index is $k^* = \min\{j : p_j \geq q\}$. Under the proposed WBCP model, the posterior threshold is $\lambda = \rho_{(K)}$, where $K = \min\{j : S_j \geq q\}$ with $S_j = \sum_{i=1}^j V_{(i)}$ and $(V_1,\ldots,V_n) \sim \Dir(\neff\,\tilde{w}_{(1)},\ldots,\neff\,\tilde{w}_{(n)})$.

\subsection{Foundational Lemma: Dirichlet Aggregation}

\begin{lemma}[Dirichlet Aggregation]
\label{lem:dir-beta}
Let $(V_1, \ldots, V_n) \sim \Dir(\alpha_1, \ldots, \alpha_n)$ with $\alpha_0 = \sum_i \alpha_i$. For $A \subseteq [n]$, $S_A = \sum_{i \in A} V_i \sim \Beta(\alpha_0 p_A, \alpha_0(1-p_A))$ where $p_A = \sum_{i \in A} \alpha_i / \alpha_0$, so $\E[S_A] = p_A$ and $\Var(S_A) = p_A(1-p_A)/(\alpha_0+1)$. In particular, for the WBCP cumulative weight, $\Var(S_j) = p_j(1-p_j)/(\neff + 1)$.
\end{lemma}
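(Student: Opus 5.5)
The plan is to use the standard Gamma representation of the Dirichlet distribution, which makes aggregation immediate. Let $G_1,\ldots,G_n$ be independent with $G_i \sim \mathrm{Gamma}(\alpha_i,1)$, and put $T=\sum_{i=1}^n G_i$. Then $(G_1/T,\ldots,G_n/T) \sim \Dir(\alpha_1,\ldots,\alpha_n)$, so we may take $V_i = G_i/T$ without loss of generality.

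For $A\subseteq[n]$, write $G_A=\sum_{i\in A}G_i$ and $G_{A^c}=\sum_{i\notin A}G_i$. These are independent, because they are built from disjoint sets of independent variables. By the additivity of Gamma variables with a common scale, $G_A\sim\mathrm{Gamma}(\alpha_0 p_A,1)$ and $G_{A^c}\sim\mathrm{Gamma}(\alpha_0(1-p_A),1)$. It follows that
\[
S_A = \frac{G_A}{G_A+G_{A^c}}.
\]
The classical Beta--Gamma relation then gives $S_A\sim\Beta(\alpha_0p_A,\alpha_0(1-p_A))$. One can check this either by a change of variables $(G_A,G_{A^c})\mapsto(S_A,T)$ or by citing the result directly.

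The moments follow from the Beta$(a,b)$ formulas, with mean $a/(a+b)$ and variance $ab/\bigl((a+b)^2(a+b+1)\bigr)$. Substituting $a=\alpha_0p_A$ and $b=\alpha_0(1-p_A)$, so that $a+b=\alpha_0$, yields
\[
\E[S_A]=p_A, \qquad \Var(S_A)=\frac{p_A(1-p_A)}{\alpha_0+1}.
\]

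For the WBCP specialization, take $\alpha_i=\neff\,\tilde w_{(i)}$. Since $\sum_i\tilde w_{(i)}=1$, we get $\alpha_0=\neff$. Now let $A=\{1,\ldots,j\}$ in the sorted order. Then $p_A=\sum_{i\le j}\tilde w_{(i)}=p_j$, and therefore
\[
\Var(S_j)=\frac{p_j(1-p_j)}{\neff+1}.
\]

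The only points needing care are degenerate ones. When $A=\emptyset$ or $A=[n]$, the sum $S_A$ is the constant $0$ or $1$, so it is a degenerate Beta; the moment formulas still hold trivially. All the $\alpha_i$ are positive, since the weights are positive, so every Gamma variable is well defined.

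The main step is the Beta--Gamma identity, and I would simply cite it as standard.
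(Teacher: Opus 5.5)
Your proof is correct and takes the same approach as the paper, which simply invokes the standard Dirichlet aggregation property without further argument; your Gamma-representation derivation (the Beta--Gamma identity, then substituting $\alpha_0=\neff$ and $A=\{1,\ldots,j\}$) is the usual proof of that fact. The moment computation and your treatment of the degenerate cases $A=\emptyset$ and $A=[n]$ are fine.
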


This is the standard Dirichlet aggregation property and is used as the bridge between the proposed Dirichlet model and Beta-tail concentration arguments below.

\subsection{Sketch for Theorem~\ref{thm:calibration}}

The key observation is that for a Dirichlet model with concentration $c$, the cumulative posterior variance scales as $p_j(1-p_j)/(c+1)$. On the frequentist side, the leading-order variance of the weighted empirical CDF scales as $F(t)(1-F(t))/\neff$ via $\sum_i \tilde w_i^2 = 1/\neff$. Matching these two first-order expressions singles out $c \asymp \neff$, with exact first-order matching at $c = \neff - 1$. This is the sense in which $\neff$ is the variance-calibrated concentration within the one-parameter weighted-Dirichlet family. The Bernstein--von Mises alignment for the weighted quantile then follows by the usual CDF-to-quantile delta-method heuristic, dividing variances by $f(\lambda^\ast)^2$ and observing that $(\neff+1)^{-1}$ and $\neff^{-1}$ agree to leading order in $\neff$.

\subsection{Sketch for Theorem~\ref{thm:concentration}}

Under the proposed Dirichlet model, cumulative weights near the target quantile are Beta-distributed by Dirichlet aggregation. Standard Beta concentration inequalities (e.g.\ \citealp{marchal2017sub}, with Pinsker's inequality bounding the binary KL by $t^2/(2\mu(1-\mu))$) imply that the random crossing index $K$ is increasingly unlikely to move far from the deterministic weighted-quantile index $k^*$ as $\neff$ grows. Translating index deviations into score deviations through the local score spacing $\bar{\Delta}_\rho$ and the minimum local weight $\underline{w}$, and combining tail bounds via the layer-cake formula, suggests the dependence of $\sigma_{\mathrm{post}}$ on $\neff$, $\bar{\Delta}_\rho$, and $\underline{w}$ stated in the theorem. We view the displayed $O(1/\sqrt{\neff})$ rate as an asymptotic concentration heuristic under the proposed model rather than a finite-sample bound with explicit constants.

\subsection{Sketch for Theorem~\ref{thm:dominance}}

The weighted extension keeps the same loss-ordering argument as BQ-CP (their Theorem~4.3) and replaces only the spacing distribution by a weighted Dirichlet on the simplex. By BQ-CP Proposition~B.2, the worst-case quantile function over a non-decreasing reconstruction is $\sum_i (t_{(i)} - t_{(i-1)})\,\ell_{(i)}$, an argument that depends only on monotonicity and not on the distribution of the spacings. Under this proposed weighted spacing model, the stochastic upper bound $L_w^+$ is obtained by marginalising over the weighted spacings exactly as in BQ-CP, yielding $\inf_\pi \Pr(L \leq b \mid \ell_{1:n}) \geq \Pr(L_w^+ \leq b)$. The resulting guarantee should therefore be interpreted as a data-conditional guarantee \emph{under the proposed weighted Bayesian spacing model}, rather than as a finite-sample frequentist coverage theorem inherited directly from weighted CP.

\subsection{Sketch for Theorem~\ref{thm:coverage}}

In the high-$\neff$ regime, the weighted quantile admits a Gaussian approximation $\sqrt{\neff}(\hat q_w - \lambda^\ast) \xrightarrow{d} \mathcal{N}(0,\, q(1-q)/f(\lambda^\ast)^2)$ via the standard CLT for importance-weighted quantiles, and the Dirichlet posterior over the threshold is correspondingly approximated by a local Gaussian centered near $\hat q_w$ (Bernstein--von Mises). Taking the $\beta$-quantile of this approximate posterior shifts the threshold upward by an amount of order $1/\sqrt{\neff}$ when $\beta > 1/2$, which in turn yields a more conservative operating point in expected conditional miscoverage at first order: schematically, $\E[\alpha_{\mathrm{cond}}] \lesssim \alpha - \Phi^{-1}(\beta)\sqrt{q(1-q)/\neff}$, clipped at $0$ when the leading-order correction would otherwise become negative. This argument is asymptotic and is intended only for the regime where the Bernstein--von Mises approximation is accurate; in the small-$\neff$ regime the approximation breaks down and the practical contribution of WBCP is the $\sigma_{\mathrm{post}}$ diagnostic itself, not the conditional-coverage rate.

% ============================================================
\section{Limit Consistency Properties}
\label{app:limits}

These results verify that WBCP recovers known frameworks in limiting cases.

\begin{proposition}[Recovery of BQ-CP]
\label{prop:uniform}
When weights are uniform ($w_i = 1$ for all $i$): $\tilde{w}_i = 1/n$, $\neff = n$, and $\Dir(\neff \cdot \tilde{w}_1, \ldots) = \Dir(1, \ldots, 1)$.
\end{proposition}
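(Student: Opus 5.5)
The claim is a direct substitution into the definitions, so the plan is to verify its three equalities in order.

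\emph{Normalized weights.} With $w_i = 1$ for all $i$, we have $\sum_{j=1}^n w_j = n$. The definition $\tilde{w}_i = w_i/\sum_j w_j$ then gives $\tilde{w}_i = 1/n$ for every $i$. Since sorting by score only permutes indices, the sorted weights satisfy $\tilde{w}_{(i)} = 1/n$ as well.

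\emph{Effective sample size.} Substituting into Eq.~\eqref{eq:neff}, the numerator is $(\sum_i w_i)^2 = n^2$ and the denominator is $\sum_i w_i^2 = n$, so $\neff = n$. It is worth noting that $\neff$ is invariant under the rescaling $w_i \mapsto c\,w_i$ for any $c>0$. The same conclusion therefore holds for any constant weight profile, not just $w_i \equiv 1$.

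\emph{Dirichlet parameters.} Each concentration parameter is $\alpha_i = \neff\,\tilde{w}_{(i)} = n \cdot (1/n) = 1$, so the WBCP Dirichlet in Eq.~\eqref{eq:mapping} and Algorithm~\ref{alg:wbqcp} is exactly $\Dir(1,\ldots,1)$ on $n$ coordinates.

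\emph{Main subtlety.} None of these steps is hard. The only point needing care is the relationship to BQ-CP. BQ-CP's spacing model in Eq.~\eqref{eq:bqcp} is $\Dir(1,\ldots,1)$ on $n+1$ coordinates, with the last spacing attached to the bound $B$. WBCP, per Assumption~\ref{asmp:dirichlet}, uses only the $n$ calibration spacings. The proposition should therefore be read as stated: an identity of the Dirichlet family on calibration spacings. Recovery of BQ-CP itself holds only for its calibration-only restriction.

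To connect the two models, we apply the Dirichlet aggregation property (Lemma~\ref{lem:dir-beta}) to the $(n+1)$-dimensional $\Dir(1,\ldots,1)$. Conditioning on or renormalizing away the final coordinate gives $\Dir(1,\ldots,1)$ on the first $n$ coordinates. The cumulative sums $S_j$ then have variance $p_j(1-p_j)/(n+1)$ instead of the BQ-CP value, which is an $O(1/n)$ discrepancy. This matches the correction acknowledged in Assumption~\ref{asmp:dirichlet}, and we would mention it in a remark rather than in the proposition.
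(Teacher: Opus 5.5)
Your proposal is correct and matches the paper's implicit argument. The paper gives no separate proof of this proposition, but the same direct substitution, $\tilde w_i = 1/n$, $\neff = n^2/n = n$ and $\alpha_i = n\cdot(1/n) = 1$, appears in the proof of Part~1 of Proposition~\ref{prop:limits-bw}. Your side remark on $n$ versus $n+1$ coordinates agrees with the caveat in Assumption~\ref{asmp:dirichlet}. However, the renormalization step you use there is the neutrality (subcomposition) property of the Dirichlet, not the aggregation Lemma~\ref{lem:dir-beta}, which only describes sums of coordinates.
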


For the spatial instantiation with bandwidth $\bw$:

\begin{proposition}[Bandwidth Limits]
\label{prop:limits-bw}
\leavevmode
\begin{enumerate}[nosep]
  \item As $\bw \to \infty$: $\tilde{w}_i(s) \to 1/n$ for all $i$, $\neff(s) \to n$, and GeoBCP $\to$ BQ-CP.
  \item As $\bw \to 0^+$ with unique nearest neighbor $s_{i^*}$: $\tilde{w}_{i^*}(s) \to 1$, $\neff(s) \to 1$, and the posterior collapses to $\delta_{\rho_{i^*}}$.
\end{enumerate}
\end{proposition}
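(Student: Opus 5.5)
We prove the two parts of Proposition~\ref{prop:limits-bw} separately, using only the kernel form of Eq.~\eqref{eq:adabw} with $\bw$ held fixed, the Kish formula~\eqref{eq:neff}, and Proposition~\ref{prop:uniform}. Throughout, the test location $s$ and the calibration locations $s_1,\ldots,s_n$ are fixed. Write $d_i=\|s-s_i\|$, so that $w_i=\exp(-d_i^2/(2\bw^2))$.

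\emph{Part 1: $\bw\to\infty$.}
\begin{enumerate}[nosep]
  \item For each $i$, $d_i^2/(2\bw^2)\to 0$, so $w_i\to 1$ by continuity of $\exp$.
  \item Since $n$ is finite, $\sum_j w_j\to n$. Hence $\tilde w_i\to 1/n$ and $\neff=(\sum w_j)^2/\sum w_j^2\to n^2/n=n$.
  \item The concentration parameters $\neff\tilde w_i$ therefore converge to $1$ for every $i$.
  \item The Dirichlet density is continuous in its parameters on the open positive orthant. Scheffé's lemma then gives $\Dir(\neff\tilde w_1,\ldots,\neff\tilde w_n)\to\Dir(1,\ldots,1)$ in total variation.
  \item The threshold $\lambda=\rho_{(K)}$ is a measurable function of $\boldsymbol V$. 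The induced posterior over $\lambda$, and hence $\lambda_{\mathrm{HPD}}$ and $\sigma_{\mathrm{post}}$, converge to their uniform-weight counterparts. By Proposition~\ref{prop:uniform} these are exactly those of BQ-CP.
  \item A measure-zero caveat: the $\beta$-quantile of the posterior may jump when $\beta$ sits exactly on an atom boundary. We state convergence of the posterior law and note that the quantile converges except in that degenerate case.
\end{enumerate}

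\emph{Part 2: $\bw\to 0^+$ with a unique nearest neighbour $i^*$.}
\begin{enumerate}[nosep]
  \item Divide numerator and denominator of $\tilde w_i$ by $w_{i^*}$. This gives $\tilde w_i=\exp(-(d_i^2-d_{i^*}^2)/(2\bw^2))\big/\sum_j(\cdots)$.
  \item Uniqueness means $d_i^2-d_{i^*}^2>0$ for $i\ne i^*$. Those terms therefore vanish, so $\tilde w_{i^*}\to 1$ and $\tilde w_i\to 0$ for $i\ne i^*$.
  \item Because $\neff$ is scale-invariant in $\boldsymbol w$, it equals $1/\sum_i\tilde w_i^2$, which tends to $1$.
  \item The Dirichlet parameters $\alpha_i=\neff\tilde w_i$ thus tend to $1$ for $i^*$ and to $0$ otherwise.
  \item For the collapse, use Lemma~\ref{lem:dir-beta}: $V_{i^*}\sim\Beta(\alpha_{i^*},\alpha_0-\alpha_{i^*})$, with $\alpha_0=\neff\to1$ and $\alpha_0-\alpha_{i^*}=\neff(1-\tilde w_{i^*})\to 0$. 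So $\E[V_{i^*}]=\tilde w_{i^*}\to1$, and since $V_{i^*}\le 1$, we get $V_{i^*}\to 1$ in probability. All other $V_i$ go to $0$.
  \item Take $q=1-\alpha\in(0,1)$. On the event $V_{i^*}>q$, the cumulative sum $S_j$ first reaches $q$ no later than the sorted position of $\rho_{i^*}$. It also cannot reach $q$ earlier, because the preceding mass is at most $1-V_{i^*}<1-q$, which is below $q$ provided $q>1/2$. Hence $K$ equals the position of $\rho_{i^*}$ and $\lambda=\rho_{i^*}$.
  \item For a general $q\in(0,1)$, instead use the event $V_{i^*}>\max(q,1-q)$. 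This event still has probability tending to $1$.
  \item Consequently $\Pr(\lambda=\rho_{i^*})\to1$, i.e.\ the posterior collapses to $\delta_{\rho_{i^*}}$.
\end{enumerate}

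The main obstacle is the last three steps of Part~2. The Dirichlet becomes degenerate because some parameters tend to $0$, so continuity arguments do not apply and we need the direct Beta-mean argument together with careful handling of the crossing index. The ties rule (several scores equal to $\rho_{i^*}$) also needs a brief check, but ties give the same $\lambda$ value, so they do not affect the limit.
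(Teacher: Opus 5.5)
Your proposal is correct. For the weight and $\neff$ limits it matches the paper's proof: in Part~1 the kernel tends to $1$, so $\tilde w_i \to 1/n$ and $\neff \to n$. In Part~2 you divide by $w_{i^*}$ and use $\neff = 1/\sum_i \tilde w_i^2$.

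The two proofs differ in where they stop. The paper ends at the Dirichlet parameters ($\alpha_i \to 1$ in Part~1; $\alpha_{i^*}\to 1$ and $\alpha_i \to 0$ in Part~2). It treats ``GeoBCP $\to$ BQ-CP'' and ``the posterior collapses to $\delta_{\rho_{i^*}}$'' as immediate consequences. You actually prove these distributional claims:
\begin{itemize}
\item In Part~1, Scheff\'e's lemma plus push-forward through the measurable map $\boldsymbol V \mapsto \rho_{(K)}$.
\item In Part~2, the Beta-mean/Markov argument giving $V_{i^*} \to 1$ in probability, followed by the crossing-index event $\{V_{i^*} > \max(q,1-q)\}$, which pins $K$ to the sorted position of $\rho_{i^*}$.
\end{itemize}

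The extra work matters most in Part~2. There the limiting parameter vector $(1,0,\ldots,0)$ lies on the boundary of the parameter space, where the Dirichlet is degenerate and density continuity is unavailable. So, as you note, parameter convergence alone does not justify the collapse. Your caveat that $\lambda_{\mathrm{HPD}}$ may fail to converge when $\beta$ coincides with a jump of the limiting CDF is also a genuine qualification that the paper's statement omits. The paper's version is shorter; yours is the one that actually establishes the threshold-level conclusions.
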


\begin{proof}
\textbf{Part 1:} $K(\|s-s_i\|/\bw) \to 1$ for all $i$ as $\bw \to \infty$, so $w_i \to 1$, $\tilde{w}_i \to 1/n$, and $\neff = n^2/n = n$. The Dirichlet parameters become $\alpha_i = n \cdot (1/n) = 1$.

\textbf{Part 2:} As $\bw \to 0^+$, $w_i/w_{i^*} = \exp(-(d_i^2 - d_{i^*}^2)/(2\bw^2)) \to 0$ for $i \neq i^*$. Thus $\tilde{w}_{i^*} \to 1$, $\neff = 1/\sum \tilde{w}_i^2 \to 1$, and $\alpha_{i^*} \to 1$, $\alpha_i \to 0$ for $i \neq i^*$.
\end{proof}

% ============================================================
\section{Adaptive Bandwidth Regularization}
\label{app:adaptive}

\begin{proposition}[Singularity Prevention]
\label{prop:singularity}
Under the adaptive bandwidth $\bw(s) = \bw_0 \cdot \mathrm{median}(\|s - s_{(1)}\|, \ldots, \|s - s_{(k)}\|)$, the effective sample size satisfies $\neff(s) \geq c_k > 0$ for all $s$, where $c_k$ depends only on $k$ and $K$.
\end{proposition}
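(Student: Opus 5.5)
The bound follows from $\neff \geq 1$ together with counting the calibration points whose weight is a fixed fraction of the largest weight. Fix $s$ and write $d_i = \|s-s_i\|$. Let $d_{(1)}\le\cdots\le d_{(k)}$ be the $k$ nearest distances and $m(s)=\mathrm{median}(d_{(1)},\ldots,d_{(k)})$, so $\bw(s)=\bw_0\, m(s)$. By the definition of the median, at least $\lceil k/2\rceil$ of the $k$ nearest neighbours satisfy $d_{(j)}\le m(s)$. For each such index, $w_j=\exp(-d_j^2/(2\bw_0^2 m(s)^2))\ge \exp(-1/(2\bw_0^2))=:\kappa$. Every weight also satisfies $w_i\le 1$. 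More generally, for a kernel $K$ I would take $\kappa = K(1/\bw_0)/K(0)$, which depends only on $K$ and on the fixed constant $\bw_0$.

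Next I bound $\neff$ from below using only these $\lceil k/2\rceil$ "heavy" points and the global bound $w_i\le 1$. The first estimate is
\[
\neff(s)=\frac{(\sum_i w_i)^2}{\sum_i w_i^2}\;\ge\;\frac{(\sum_i w_i)^2}{\max_i w_i\,\sum_i w_i}=\frac{\sum_i w_i}{\max_i w_i}.
\]
The maximum weight is $w_{\max}\le 1$, and the sum satisfies $\sum_i w_i \ge \lceil k/2\rceil\,\kappa$. Therefore $\neff(s)\ge \lceil k/2\rceil\,\kappa$. This suggests taking $c_k=\max\{1,\lceil k/2\rceil \exp(-1/(2\bw_0^2))\}$; the trivial part of the bound is $\neff \ge 1$, which always holds by Cauchy--Schwarz. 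The constant depends only on $k$, $K$ and $\bw_0$, and not on $s$ or on the configuration of calibration points. The "$K$" in the statement I read as the kernel, with $\bw_0$ treated as a fixed design constant.

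The main obstacle is a degenerate configuration. If $m(s)=0$, which happens when $s$ coincides with at least half of its $k$ nearest neighbours, the bandwidth vanishes and the weights are undefined. I would handle this with the convention $0/0$ in the exponent set to $0$, so coincident points get weight $1$. Then at least $\lceil k/2\rceil$ points have weight $1$, the same bound holds, and it remains valid in the limit $\bw\to0^+$. I would also stress the contrast with Proposition~\ref{prop:limits-bw}. With a fixed bandwidth, $\neff(s)\to1$ as $s$ moves away from the data; here the bandwidth scales with the local neighbour distances, so at least $\lceil k/2\rceil$ points always stay inside one bandwidth. This is the substantive content of the proposition, since the positivity claim alone is trivial.
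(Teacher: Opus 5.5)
Your argument is correct and follows the same idea as the paper's sketch: neighbours at the bandwidth scale receive weight at least $\exp(-C)$ with $C$ depending only on $\bw_0$, and this bounds $\neff(s)$ below uniformly. Your version is more careful in two places that matter. First, the paper asserts that all $k$ neighbours lie within $O(\bw(s))$, which fails when $d_{(k)}\gg\mathrm{median}$ (e.g.\ a tight cluster of $\lceil k/2\rceil$ points with the rest far away), so restricting to the $\lceil k/2\rceil$ points at or below the median is exactly the right fix. Second, your inequality $\neff\ge\sum_i w_i/\max_i w_i$, together with $w_i\le K(0)=1$, supplies the step from per-point weight lower bounds to an $\neff$ lower bound, which the paper leaves implicit. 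One small mislabel: $\neff\ge 1$ follows from nonnegativity of the cross terms in $(\sum_i w_i)^2$, not from Cauchy--Schwarz, which instead gives $\neff\le n$.
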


\begin{proof}[Proof sketch]
The adaptive bandwidth ensures $k$ calibration points within $O(\bw(s))$ distance, each receiving weight $w_i \geq \exp(-C)$ for a constant $C$ depending on $\bw_0$. Thus $\neff(s)$ is bounded below uniformly.
\end{proof}

\begin{proposition}[Lipschitz Continuity]
\label{prop:lipschitz}
Under regularity conditions, $s \mapsto \neff(s)$ and $s \mapsto \sigma_{\mathrm{post}}^2(s)$ are Lipschitz continuous.
\end{proposition}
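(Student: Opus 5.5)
We prove Lipschitz continuity of the two maps $s \mapsto \neff(s)$ and $s \mapsto \sigma_{\mathrm{post}}^2(s)$ by composing locally Lipschitz maps. The chain is
\[
s \;\mapsto\; \bigl(w_i(s)\bigr)_{i} \;\mapsto\; \bigl(\tilde w_i(s)\bigr)_{i},\ \neff(s) \;\mapsto\; \boldsymbol{\alpha}(s)=\neff(s)\,\tilde{\boldsymbol w}(s) \;\mapsto\; \sigma_{\mathrm{post}}^2(s).
\]
The regularity conditions we impose are the following.
\begin{enumerate}[nosep]
  \item The domain of $s$ is compact (or we work locally).
  \item The kernel is the Gaussian kernel of Eq.~\eqref{eq:adabw}.
  \item The bandwidth $\bw(s)$ is Lipschitz and bounded below by some $\bw_{\min}>0$.
\end{enumerate}
Condition 3 holds for the adaptive bandwidth away from calibration sites, because the median of the $k$ nearest-neighbour distances is a $1$-Lipschitz function of $s$. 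At the calibration sites themselves we must either add a floor to $\bw$ or assume $\bw_{\min}>0$.

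\textbf{Step 1 (weights).} Each $w_i(s)=\exp(-\|s-s_i\|^2/(2\bw(s)^2))$ has gradient bounded by a constant depending on $\mathrm{diam}$, $\bw_{\min}$ and $\mathrm{Lip}(\bw)$, since $\bw$ is bounded below. By Proposition~\ref{prop:singularity}, $W(s)=\sum_j w_j(s)\ge k e^{-C}>0$. Hence $\tilde w_i = w_i/W$ is Lipschitz, as a quotient of Lipschitz functions with denominator bounded away from zero. The quantity $\neff = 1/\sum_i \tilde w_i^2$ is a smooth function of $\tilde{\boldsymbol w}$ on the region where $\sum_i\tilde w_i^2\ge 1/n$, so $\neff(s)$ is Lipschitz with constant $O(n^{3/2}\max_i \mathrm{Lip}(\tilde w_i))$. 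This proves the first claim.

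\textbf{Step 2 (posterior variance).} We write the posterior variance in closed form:
\[
\sigma_{\mathrm{post}}^2 = \sum_j \rho_{(j)}^2\,\pi_j - \Bigl(\sum_j \rho_{(j)}\pi_j\Bigr)^2,
\qquad
\pi_j(\boldsymbol\alpha)=\Pr(S_{j-1}<q\le S_j).
\]
Because the scores are fixed, it suffices to show that each $\pi_j$ is Lipschitz in $s$. We have $\pi_j = \Pr(S_j\ge q)-\Pr(S_{j-1}\ge q)$, where, by Lemma~\ref{lem:dir-beta}, $S_j\sim \Beta(\neff p_j,\neff(1-p_j))$. Therefore $\pi_j$ is a difference of regularized incomplete Beta functions $1-I_q(a,b)$, evaluated at $a=\neff(s)p_j(s)$ and $b=\neff(s)(1-p_j(s))$. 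The maps $s\mapsto a,b$ are Lipschitz by Step 1.

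\textbf{Main obstacle.} The hard part is bounding $\partial_a I_q(a,b)$ and $\partial_b I_q(a,b)$ uniformly. These derivatives blow up as $a\to0$ or $b\to0$, which happens when $p_j\in\{0,1\}$ (the boundary indices) or when individual $\alpha_i\to0$. I would handle this in three parts.
\begin{enumerate}[nosep]
  \item The terms $j=n$ and $p_0=0$ are degenerate probabilities, equal to $1$ or $0$, and are constant.
  \item For interior $j$, on a compact domain we have $p_j(s)\in[\tilde w_{\min},1-\tilde w_{\min}]$. Each $w_i$ is bounded below by $\exp(-\mathrm{diam}^2/(2\bw_{\min}^2))$, so $\tilde w_{\min}>0$. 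Combined with $\neff\ge1$, this keeps $(a,b)$ in a compact subset of $(0,\infty)^2$.
  \item On that compact subset, $I_q$ is $C^1$: differentiate under the integral $\int_0^q t^{a-1}(1-t)^{b-1}\,dt/B(a,b)$, using the digamma function for $B$. Hence it is Lipschitz there.
\end{enumerate}
Composing the pieces gives Lipschitz $\pi_j$, and hence Lipschitz $\sigma_{\mathrm{post}}^2$. The constant is explicit in $n$, $\max_j\rho_{(j)}^2$, $\bw_{\min}$ and $\mathrm{Lip}(\bw)$. These are exactly the ``regularity conditions'' we require.
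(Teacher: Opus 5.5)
Your proof is correct. For $\neff$ you follow the paper's own route. The paper's sketch says only that the median of $k$-NN distances is Lipschitz, that the kernel weights are smooth compositions, and that ``by the chain rule'' $\neff(s)$ inherits Lipschitz continuity. Your Step~1 is that argument with the two inputs the chain rule actually needs made explicit: a positive floor on $\bw(s)$, and a positive lower bound on $W(s)=\sum_j w_j(s)$.

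For $\sigma_{\mathrm{post}}^2$ the paper gives no argument at all, and this is where your proposal adds real content. A chain-rule argument would need smoothness of the exact posterior variance (Definition~\ref{def:two-levels}) as a function of the Dirichlet parameters. That smoothness is not evident, because $\lambda=\rho_{(K)}$ is a step function of the draw; the Monte Carlo standard deviation returned by Algorithm~\ref{alg:wbqcp} is not even continuous in $s$. Your mixture representation with $\pi_j=\Pr(S_j\ge q)-\Pr(S_{j-1}\ge q)$, reduced via Lemma~\ref{lem:dir-beta} to regularized incomplete Beta functions of $(\neff p_j,\neff(1-p_j))$, integrates out exactly this discontinuity. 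It turns the second half of the proposition into an actual proof rather than an assertion.

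Several refinements are worth noting.

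\emph{Your ``main obstacle'' is milder than you say.} For fixed $q\in(0,1)$, the partial derivatives of $I_q(a,b)$ stay bounded when only one of $a,b$ tends to $0$; for example, $I_q(a,1)=q^a$. Write $1-I_q(a,b)=\frac{\Gamma(a+b)}{\Gamma(a)\Gamma(b)}\int_q^1 t^{a-1}(1-t)^{b-1}\,dt$ and use that $1/\Gamma$ is entire. This shows $I_q$ is $C^1$ up to the edge $a=0$ whenever $b\ge 1/2$, and symmetrically up to $b=0$ whenever $a\ge 1/2$. The corner $a=b=0$ never arises, because $a+b=\neff\ge 1$. Hence $I_q$ is Lipschitz on the compact convex set $\{a,b\ge 0,\ 1\le a+b\le n\}$. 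You can therefore drop the floor $\tilde w_{\min}>0$ from Step~2, together with the compact-domain and full-support-kernel hypotheses it requires.

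\emph{The bandwidth floor is automatic in most cases.} The median of the $k$-NN distances is $1$-Lipschitz everywhere, calibration sites included. For $k\ge 2$ with distinct calibration locations at minimum separation $\delta$, it satisfies $\bw(s)\ge \bw_0\delta/4$, so no artificial floor is needed.

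\emph{You do not need Proposition~\ref{prop:singularity} for $W(s)$.} Since $\|s-s_{(1)}\|\le \bw(s)/\bw_0$, you get $W(s)\ge e^{-1/(2\bw_0^2)}$ directly.

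\emph{The constant for $\neff$ is slightly off.} The gradient $\nabla_{\tilde w}\neff=-2\neff^2\tilde w$ has Euclidean norm $2\neff^{3/2}$. So the coordinatewise Lipschitz bound is $O(n^2\max_i\mathrm{Lip}(\tilde w_i))$ rather than $O(n^{3/2}\max_i\mathrm{Lip}(\tilde w_i))$. This does not affect the conclusion.
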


\begin{proof}[Proof sketch]
The adaptive bandwidth $\bw(s)$ varies continuously (median of $k$-NN distances is Lipschitz). Kernel weights $w_i(s)$ are smooth compositions. By the chain rule, $\neff(s)$ inherits Lipschitz continuity.
\end{proof}

% ============================================================
\section{Discussion of Limitations}
\label{app:discussion}

\textbf{Spatial autocorrelation of residuals.} If residuals exhibit spatial autocorrelation (e.g., measured by Moran's $I$), correlated neighbors provide redundant information that inflates $\neff$. A design-effect correction can be applied:
\begin{equation}
  \neff^{\mathrm{adj}} = \frac{\neff}{1 + (k_{\mathrm{eff}} - 1) \cdot \bar{r}},
  \label{eq:neff-adj}
\end{equation}
where $\bar{r}$ is the average pairwise correlation among effective neighbors and $k_{\mathrm{eff}}$ is their count. This mirrors the classical design effect in survey sampling.

\textbf{Weight misspecification.} When the importance weights are misspecified ($w_i \not\propto dP_{\mathrm{test}}/dP_{\mathrm{calib}}$), the coverage guarantee of Theorem~\ref{thm:coverage} may not hold---this is precisely the same limitation as in weighted CP~\citep{tibshirani2019conformal}, who noted that the likelihood ratio must be known or accurately estimated. In practice,~\citet{tibshirani2019conformal} demonstrated that density ratios can be estimated via probabilistic classification (logistic regression or random forests), and such estimated weights still yield correct coverage. For WBCP, the $\neff$ diagnostic provides additional robustness information: low $\neff$ flags regions where few calibration points contribute, regardless of specification quality.

% ============================================================
\section{Full Ablation Study: 6-Method Comparison}
\label{app:ablation}

The main paper (Table~\ref{tab:cross_dataset}) reports only the proposed method (AdaGeoBCP) and three headline baselines (Standard CP, BQ-CP, GeoCP). This appendix adds the two ablation cells from the $2\times3$ grid in \S\ref{sec:geo}---fixed-bandwidth GeoBCP (which removes adaptive bandwidth from AdaGeoBCP) and frequentist AdaGeoCP (which removes the Bayesian overlay)---to verify that \emph{both} ingredients of AdaGeoBCP are necessary.

\subsection{Synthetic GRF benchmark, all 6 methods}

\begin{table}[h]
\centering
\caption{Synthetic spatial benchmark, all $6$ methods at $\ell=2.0$ (Moran's I $= 0.8677$, $\alpha = 0.1$). The two cells \textbf{not} reported in the main paper Table~\ref{tab:cross_dataset} are GeoBCP (fixed-bandwidth Bayesian) and AdaGeoCP (adaptive frequentist), shaded for clarity.}
\label{tab:synthetic}
\small
\begin{tabular}{lcccc}
\toprule
\textbf{Method} & \textbf{Coverage} & \textbf{Mean Width} & \textbf{Mean $\neff$} & \textbf{Mean $\sigma_{\mathrm{post}}$} \\
\midrule
Standard CP                                              & 0.9047          & 3.1849          & --- & --- \\
BQ-CP                                                    & 0.9187          & 3.4864          & 1500 & 0.0950 \\
GeoCP                                                    & 0.9647          & 1.9792          & --- & --- \\
\rowcolor{black!4} GeoBCP\,$\dagger$ (ablation)          & 0.9867          & 2.2224          & 62   & 0.1073 \\
\rowcolor{black!4} AdaGeoCP\,$\dagger$ (ablation)        & 0.9587          & \textbf{1.6642} & --- & --- \\
\textbf{AdaGeoBCP (proposed)}                            & 0.9827          & 1.8730          & 10   & 0.1081 \\
\bottomrule
\end{tabular}
\end{table}

\subsection{Real-world cross-dataset, all 6 methods}

\begin{table}[h]
\centering
\caption{Real-world cross-dataset summary, all $6$ methods. Bootstrap mean $\pm$ std across $13$ datasets ($10$ random splits each); right two columns report ``mean / loc-std'' (across-locations mean and across-locations std). Ablation cells (GeoBCP, AdaGeoCP) shaded. Both removed ingredients drop coverage below target on most datasets, confirming that adaptive bandwidth and Bayesian overlay are jointly required.}
\label{tab:cross_dataset_full}
\small
\setlength{\tabcolsep}{3pt}
\begin{tabular}{lccc}
\toprule
\textbf{Method} & \textbf{Coverage} & $\boldsymbol{\neff}$ \textbf{(mean / loc-std)} & $\boldsymbol{\sigma_{\mathrm{post}}}$ \textbf{(mean / loc-std)} \\
\midrule
Standard CP                                                          & $0.903 \pm 0.009$           & ---                       & ---                       \\
BQ-CP                                                                & $0.928 \pm 0.011$           & $242.9$ / $0.0$           & $0.58$ / $0.02$           \\
GeoCP                                                                & $0.824 \pm 0.034$           & ---                       & ---                       \\
\rowcolor{black!4} GeoBCP\,$\dagger$ (ablation)                      & $0.876 \pm 0.025$           & $23.2$ / $6.5$            & $1.17$ / $1.10$           \\
\rowcolor{black!4} AdaGeoCP\,$\dagger$ (ablation)                    & $0.895 \pm 0.007$           & ---                       & ---                       \\
\textbf{AdaGeoBCP (proposed)}                                        & $\mathbf{0.925 \pm 0.009}$  & $171.3$ / $\mathbf{13.1}$ & $1.19$ / $\mathbf{0.95}$  \\
\bottomrule
\end{tabular}
\end{table}

\subsection{Ablation read-out}

\textbf{Removing the Bayesian overlay} (AdaGeoBCP $\to$ AdaGeoCP) drops cross-dataset target-coverage attainment from $13/13$ to $3/13$ (mean coverage $0.925 \to 0.895$); the proposed method's posterior is required to recover from the under-coverage of bandwidth-localized weighted quantiles in the small-$\neff$ regime.

\textbf{Removing adaptive bandwidth} (AdaGeoBCP $\to$ GeoBCP) drops attainment to $1/13$ (mean coverage $0.925 \to 0.876$): the auto-bandwidth used by GeoCP/GeoBCP gives $\neff \approx 7$ on most non-Seattle datasets, leaving the underlying weighted quantile too biased even with the Bayesian posterior on top.

\textbf{Removing both} (Standard CP and BQ-CP rows) recovers the i.i.d.\ baselines: BQ-CP attains $13/13$ but only because uniform weights ignore distribution shift entirely (cf.\ the $\neff$ loc-std $0.0$ column---no spatial adaptation).

The two ingredients are therefore complementary: the Bayesian overlay alone (GeoBCP) does not fix the under-coverage induced by fixed bandwidth, and adaptive bandwidth alone (AdaGeoCP) does not address the meta-uncertainty gap. Only the combined AdaGeoBCP simultaneously handles distribution shift, attains target coverage, and produces spatially varying $\sigma_{\mathrm{post}}$ / $\neff$ diagnostics.

% ============================================================
\section{Coverage--Width Frontier under \texorpdfstring{$\beta$}{β} Sweep}
\label{app:beta_sweep}

The headline tables in \S\ref{sec:experiments} use a single fixed HPD confidence $\beta = 0.9$, which deliberately trades coverage margin for posterior protection and is therefore not directly comparable to weighted CP at \emph{matched} coverage. We address this by sweeping $\beta \in \{0.50, 0.60, 0.70, 0.80, 0.90, 0.95, 0.99\}$ on the Seattle dataset and recomputing the HPD threshold (and thus coverage and width) from the \emph{same} posterior samples produced by Algorithm~\ref{alg:wbqcp}; no additional model fitting is required.

\textbf{Frontier behaviour.} Figure~\ref{fig:beta_frontier} plots the resulting (mean width, coverage) frontier for the three Bayesian variants, with the corresponding weighted-CP point-estimate baselines (CP, GeoCP, AdaGeoCP) overlaid as filled markers.

\begin{figure}[ht]
  \centering
  \includegraphics[width=0.85\linewidth]{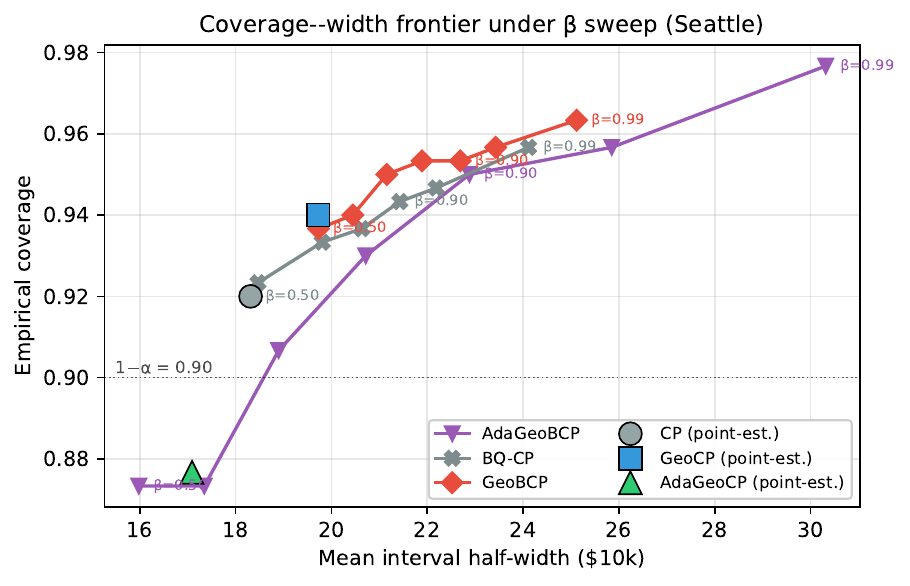}
  \caption{Coverage--width frontier obtained by sweeping the HPD confidence $\beta$ on the Seattle dataset. Each Bayesian method (BQ-CP, GeoBCP, AdaGeoBCP) traces a curve from $\beta = 0.50$ (left, narrowest, lowest coverage) to $\beta = 0.99$ (right, widest, highest coverage). At $\beta = 0.5$, BQ-CP and GeoBCP lie close to their corresponding point-estimate baselines (filled markers), while AdaGeoBCP remains noticeably different in the small-$\neff$ regime. The dashed horizontal line marks the marginal target $1-\alpha = 0.9$.}
  \label{fig:beta_frontier}
\end{figure}

\textbf{Behaviour at $\beta = 0.5$.} Table~\ref{tab:beta_sweep} shows that at $\beta = 0.5$, BQ-CP and GeoBCP are numerically close to their corresponding point-estimate baselines in both coverage and width, whereas AdaGeoBCP remains visibly different from AdaGeoCP because the posterior is much more diffuse in the small-$\neff$ regime. We therefore interpret $\beta$ not as yielding exact recovery of weighted CP, but as providing a practical operating-point control on a common posterior.

\begin{table}[ht]
\centering
\caption{Seattle: coverage and mean interval half-width as a function of $\beta$, recomputed from the same posterior samples. Reference point-estimate baselines: Standard CP (cov $0.9200$, width $18.31$), GeoCP (cov $0.9400$, width $19.72$), AdaGeoCP (cov $0.8767$, width $17.09$).}
\label{tab:beta_sweep}
\small
\begin{tabular}{l|cccccccc}
\toprule
\textbf{Method} & \textbf{Metric} & $\beta=0.50$ & $0.60$ & $0.70$ & $0.80$ & $0.90$ & $0.95$ & $0.99$ \\
\midrule
\multirow{2}{*}{BQ-CP}     & Coverage   & 0.9200 & 0.9233 & 0.9333 & 0.9367 & 0.9433 & 0.9467 & 0.9567 \\
                            & Mean width & 18.32 & 18.47 & 19.81 & 20.63 & 21.43 & 22.19 & 24.12 \\
\midrule
\multirow{2}{*}{GeoBCP}    & Coverage   & 0.9367 & 0.9400 & 0.9500 & 0.9533 & 0.9533 & 0.9567 & 0.9633 \\
                            & Mean width & 19.73 & 20.45 & 21.16 & 21.89 & 22.69 & 23.43 & 25.12 \\
\midrule
\multirow{2}{*}{AdaGeoBCP} & Coverage   & 0.8733 & 0.8733 & 0.9067 & 0.9300 & 0.9500 & 0.9567 & 0.9767 \\
                            & Mean width & 15.98 & 17.35 & 18.90 & 20.72 & 22.88 & 25.85 & 30.32 \\
\bottomrule
\end{tabular}
\end{table}

\textbf{Reading the frontier.} Three observations are worth emphasizing. \emph{(i)}~Within each row, $\sigma_{\mathrm{post}}$ and $\neff$ are unchanged across $\beta$ (they are properties of the posterior, not of the threshold quantile), so the diagnostic value of WBCP is decoupled from the conservativeness choice. \emph{(ii)}~At $\beta = 0.5$ the AdaGeoBCP frontier passes \emph{below} the marginal target ($\text{cov}=0.873$), reproducing the AdaGeoCP under-coverage failure mode of Table~\ref{tab:seattle_50split}; this is expected because the posterior median tracks the (under-covering) point estimate. Moving to $\beta=0.9$ recovers $0.95$ coverage with width comparable to BQ-CP. \emph{(iii)}~The frontiers for BQ-CP and GeoBCP rise gently with $\beta$, while AdaGeoBCP's frontier is much steeper because its $\neff$ is small ($\approx 13$): the Dirichlet posterior is diffuse and the gap between successive HPD quantiles is large. This empirical pattern is in line with the spatial adaptivity suggested by Theorem~\ref{thm:concentration}.

We therefore recommend that practitioners report the $(\beta, \text{coverage}, \text{width})$ triple rather than a single fixed-$\beta$ point, with the choice of $\beta$ guided by the application's tolerance for over- vs.\ under-coverage; $\sigma_{\mathrm{post}}$ and $\neff$ remain available as diagnostics at every operating point.

% ============================================================
\section{Simulation Study Results}
\label{sec:simulation_study}

The numerical synthetic results for all $6$ methods at $\ell=2.0$ are reported as part of the ablation study, Table~\ref{tab:synthetic} in Appendix~\ref{app:ablation}. The figures below show the simulation design and the spatial-autocorrelation sweep across $\ell\in\{0.5,1,2,4,8,16\}$.

\begin{figure}[ht]
    \centering
    \includegraphics[width=\linewidth]{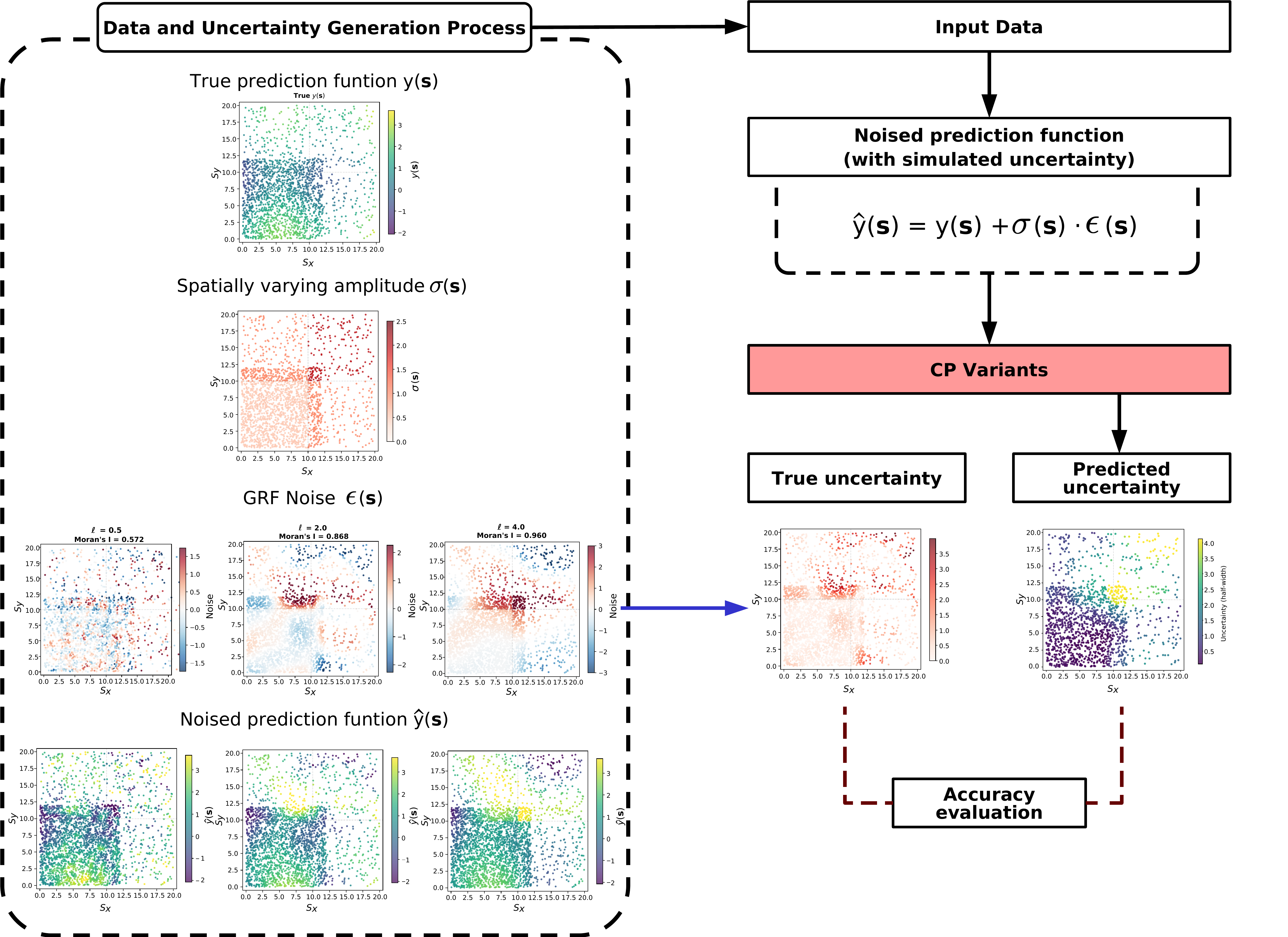}
    \caption{An overview of simulation design. A true prediction function $y(\textbf{s})$ and a noised prediction function with simulated uncertainty $\hat{y}(s)=y(\textbf{s})+\sigma(\textbf{s})\cdot\epsilon(\textbf{s})$ are artificially formulated. Different CP variants are employed to generate predicted uncertainty, which is compared with the designed true uncertainty.}
    \label{fig:geobcp_simulation_design}
\end{figure}

\begin{figure}[ht]
    \centering
    \includegraphics[width=\linewidth]{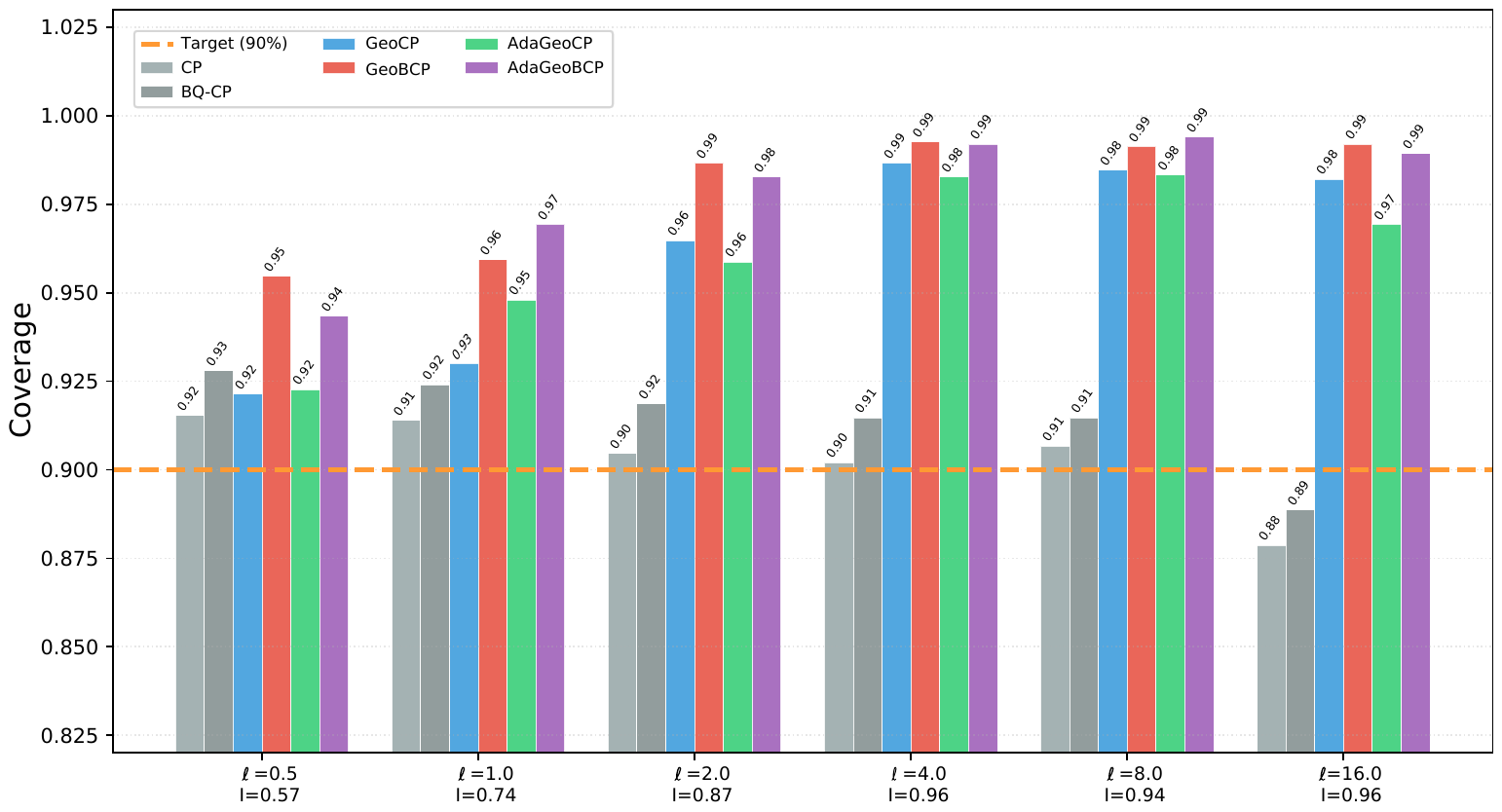}
    \caption{Coverage across all five variants at different spatial autocorrelation levels.}
    \label{fig:geobcp_simulation_coverage}
\end{figure}

\begin{figure}[ht]
    \centering
    \includegraphics[width=\linewidth]{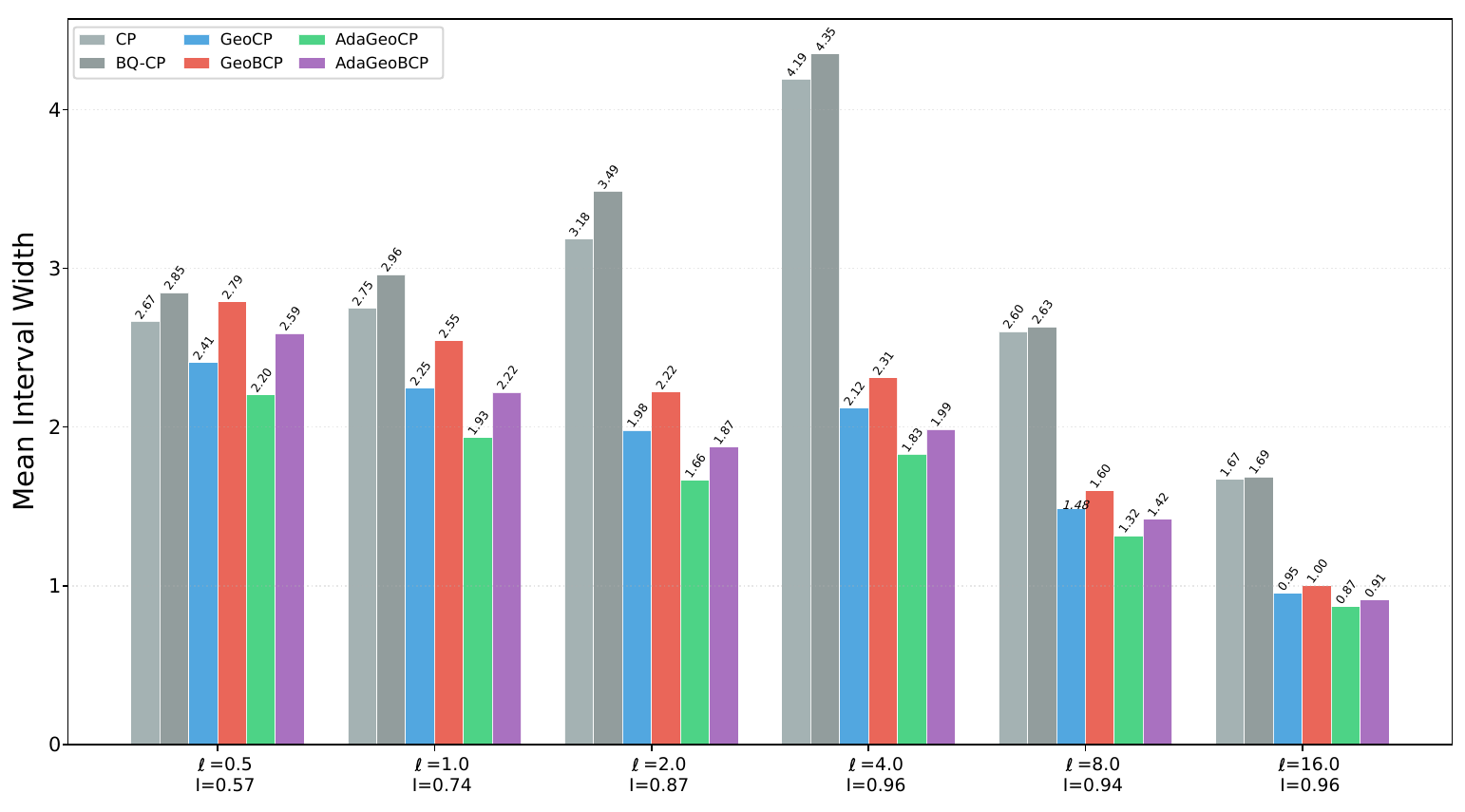}
    \caption{Prediction interval width across all five variants at different spatial autocorrelation levels.}
    \label{fig:geobcp_simulation_interval_width}
\end{figure}

\section{Real-world Study Results}\label{sec:real_world_study}

This appendix provides the 50-split Seattle headline bootstrap, dataset-detail tables, and complementary cross-dataset distributional views (Figure~\ref{fig:geobcp_map_posteriors} appears in the main paper).

\subsection{Dataset Details and Prediction Models}
\label{app:dataset_details}

Table~\ref{tab:datasets} reports the dataset, application domain, raw size $n$, number of features $p$, prediction target, target value range, and source for all $13$ real-world datasets. All targets are continuous and the task is regression. We use a standard XGBoost regressor~\citep{chen2016xgboost} as the base $\hat{f}$ for every dataset, with shared hyperparameters $n_{\mathrm{estimators}}=500$, $\max\_\mathrm{depth}=3$, learning rate $0.1$, $\mathrm{min\_child\_weight}=1$, and $\mathrm{colsample\_bytree}=1.0$; per-dataset XGBoost test-set $R^2$ is reported in Table~\ref{tab:datasets} (computed once at $\mathrm{seed}=42$ to summarize the base learner's quality). For datasets with $n>5000$ we subsample $5000$ rows uniformly at random for tractable bootstrap repetition; for FIA Forest we subsample $3000$ to match the headline Seattle scale. The 80/10/10 train / calibration / test split is then resampled across $10$ random seeds to produce bootstrap CIs in Table~\ref{tab:cross_dataset}. The conformity score is the absolute residual $\rho_i = |\hat{f}(X_i) - Y_i|$ for all datasets. Coordinate columns are EPSG:4326 (lat/lon) except Seattle (UTM~10N reprojected to EPSG:4326 for spatial weighting) and Voting (Albers Equal Area projection in metres). Categorical features are label-encoded; missing feature values are filled with $0$.

\begin{table}[ht]
\centering
\caption{Real-world datasets used in the cross-dataset evaluation (\S\ref{sec:experiments} and Table~\ref{tab:cross_dataset}). $n$ is the number of records before any subsampling; for the bootstrap experiment, datasets with $n>5000$ are uniformly subsampled. Test-set $R^2$ is the XGBoost base-model fit at random seed $42$. Source citations and access URLs are listed below the table.}
\label{tab:datasets}
\small
\setlength{\tabcolsep}{3.5pt}
\begin{tabular}{lllrrlcl}
\toprule
\textbf{Dataset} & \textbf{Domain} & \textbf{Source} & $\boldsymbol{n}$ & $\boldsymbol{p}$ & \textbf{Target $Y$} & \textbf{Range} & \textbf{XGB $R^2$} \\
\midrule
Seattle             & Housing       & Kaggle KC$^{\,a}$               & 3,000   & 10 & price (\$10k)              & $[8.2,\,363.5]$  & $0.87$ \\
Climate             & Climate       & ERA5$^{\,b}$                    & 83,731  &  9 & annual climate metric      & $[0.17,\,30.5]$  & $0.92$ \\
Hydro               & Hydrology     & CAMELS$^{\,c}$                  & 671     & 10 & runoff/streamflow metric   & $[0.0,\,9.7]$    & $0.92$ \\
Voting              & Politics      & MIT Election Lab$^{\,d}$        & 3,108   & 14 & vote-share metric (\%)     & $[3.1,\,94.5]$   & $0.82$ \\
Forest              & Forestry      & USDA FIA$^{\,e}$                & 79,878  & 12 & forest stand metric        & $[0.0,\,767.4]$  & $0.65$ \\
Health\_ARTHRITIS   & Public health & CDC PLACES$^{\,f}$              & 2,350   &  8 & arthritis prevalence (\%)  & $[16.2,\,36.6]$  & $0.86$ \\
Health\_BPHIGH      & Public health & CDC PLACES$^{\,f}$              & 2,350   &  8 & high-BP prevalence (\%)    & $[21.6,\,52.7]$  & $0.91$ \\
Health\_CANCER      & Public health & CDC PLACES$^{\,f}$              & 2,350   &  8 & cancer prevalence (\%)     & $[4.6,\,8.2]$    & $0.83$ \\
Health\_CASTHMA     & Public health & CDC PLACES$^{\,f}$              & 2,350   &  8 & current-asthma prev.\ (\%) & $[7.3,\,15.2]$   & $0.80$ \\
Health\_DEPRESSION  & Public health & CDC PLACES$^{\,f}$              & 2,350   &  8 & depression prevalence (\%) & $[12.5,\,35.7]$  & $0.84$ \\
Health\_DIABETES    & Public health & CDC PLACES$^{\,f}$              & 2,350   &  8 & diabetes prevalence (\%)   & $[6.2,\,21.5]$   & $0.93$ \\
Health\_OBESITY     & Public health & CDC PLACES$^{\,f}$              & 2,350   &  8 & obesity prevalence (\%)    & $[17.7,\,53.0]$  & $0.91$ \\
Health\_STROKE      & Public health & CDC PLACES$^{\,f}$              & 2,350   &  8 & stroke prevalence (\%)     & $[1.9,\,7.6]$    & $0.91$ \\
\bottomrule
\end{tabular}

\vspace{4pt}
{\footnotesize
\begin{tabular}{@{}p{0.97\linewidth}@{}}
$^{a}$~``House Sales in King County, USA'' Kaggle dataset~\citep{kc_housing_kaggle}; raw data: \url{https://www.kaggle.com/datasets/harlfoxem/housesalesprediction}. \\
$^{b}$~ERA5 global reanalysis~\citep{hersbach2020era5}, accessed via the Copernicus Climate Data Store (\url{https://cds.climate.copernicus.eu}); we extract a US-grid subset and aggregate to an annual climate variable. \\
$^{c}$~CAMELS catchment attributes and meteorology benchmark~\citep{addor2017camels}; data: \url{https://gdex.ucar.edu/dataset/camels.html}. \\
$^{d}$~U.S.\ County Presidential Election Returns 2000--2020, MIT Election Data and Science Lab~\citep{mit_election_lab_county} via Harvard Dataverse: \url{https://doi.org/10.7910/DVN/VOQCHQ}. \\
$^{e}$~USDA Forest Service Forest Inventory and Analysis (FIA) Database~\citep{usda_fia}: \url{https://www.fs.usda.gov/research/programs/fia}; we use the public plot-level export. \\
$^{f}$~U.S.\ CDC PLACES Local Data for Better Health, county-level prevalence release~\citep{cdc_places}: \url{https://www.cdc.gov/places}; each Health\_* dataset uses the indicator named after the suffix (e.g.\ Health\_DIABETES uses the diabetes prevalence indicator) joined with $8$ standard demographic / socioeconomic county-level covariates as features. \\
\end{tabular}}
\end{table}

\begin{table}[ht]
\centering
\caption{Seattle headline bootstrap ($\alpha=0.1$, $50$ random splits). Each cell reports mean $\pm$ standard deviation over splits. Proposed methods in \textbf{bold}; \textbf{bold} also marks the highest Coverage and the smallest Mean Width. $\dagger$~AdaGeoCP fails the target $1-\alpha=0.9$ on this dataset (mean $0.877$, consistent across all $50$ splits).}
\label{tab:seattle_50split}
\small
\setlength{\tabcolsep}{4pt}
\begin{tabular}{lcccc}
\toprule
\textbf{Method} & \textbf{Coverage} & \textbf{Mean Width (\$10k)} & \textbf{Mean $\neff$} & \textbf{Mean $\sigma_{\mathrm{post}}$} \\
\midrule
Standard CP            & $0.909 \pm 0.023$            & $\mathbf{17.68 \pm 1.56}$ & --- & --- \\
BQ-CP                  & $0.929 \pm 0.021$            & $20.41 \pm 1.98$         & $300 \pm 0$    & $1.90 \pm 0.45$ \\
GeoCP                  & $0.921 \pm 0.022$            & $18.66 \pm 1.61$         & --- & --- \\
\textbf{GeoBCP}        & $\mathbf{0.941 \pm 0.021}$   & $21.98 \pm 2.06$         & $213 \pm 3$    & $2.51 \pm 0.55$ \\
AdaGeoCP$^{\dagger}$   & $0.877 \pm 0.026^{\dagger}$  & $18.21 \pm 1.62$         & --- & --- \\
\textbf{AdaGeoBCP}     & $0.929 \pm 0.020$            & $24.91 \pm 2.85$         & $13.5 \pm 0.5$ & $6.80 \pm 1.32$ \\
\bottomrule
\end{tabular}
\end{table}

\begin{figure}[ht]
    \centering
    \includegraphics[width=\linewidth]{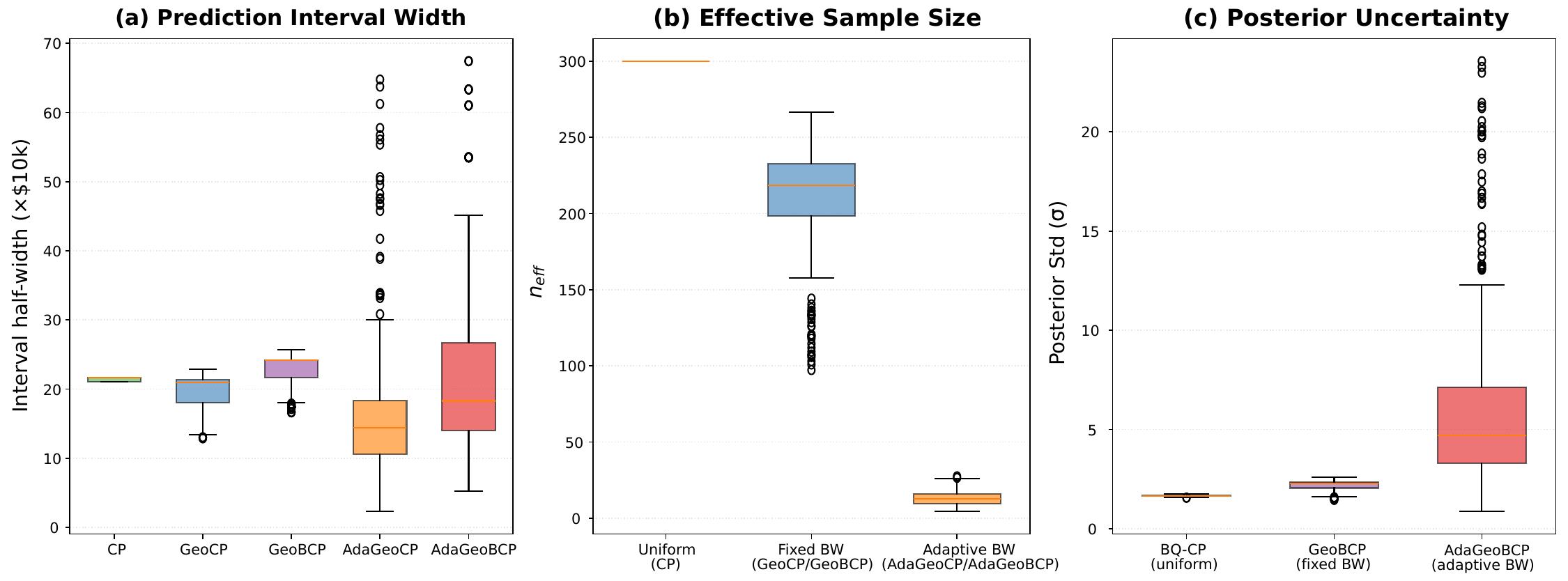}
    \caption{Distribution comparison across all five variants (Seattle house price as an example). (a) Prediction interval half-width. (b) Effective sample size. (c) Posterior standard deviation.}
    \label{fig:variants_boxplot}
\end{figure}

\begin{figure}[ht]
    \centering
    \includegraphics[width=\linewidth]{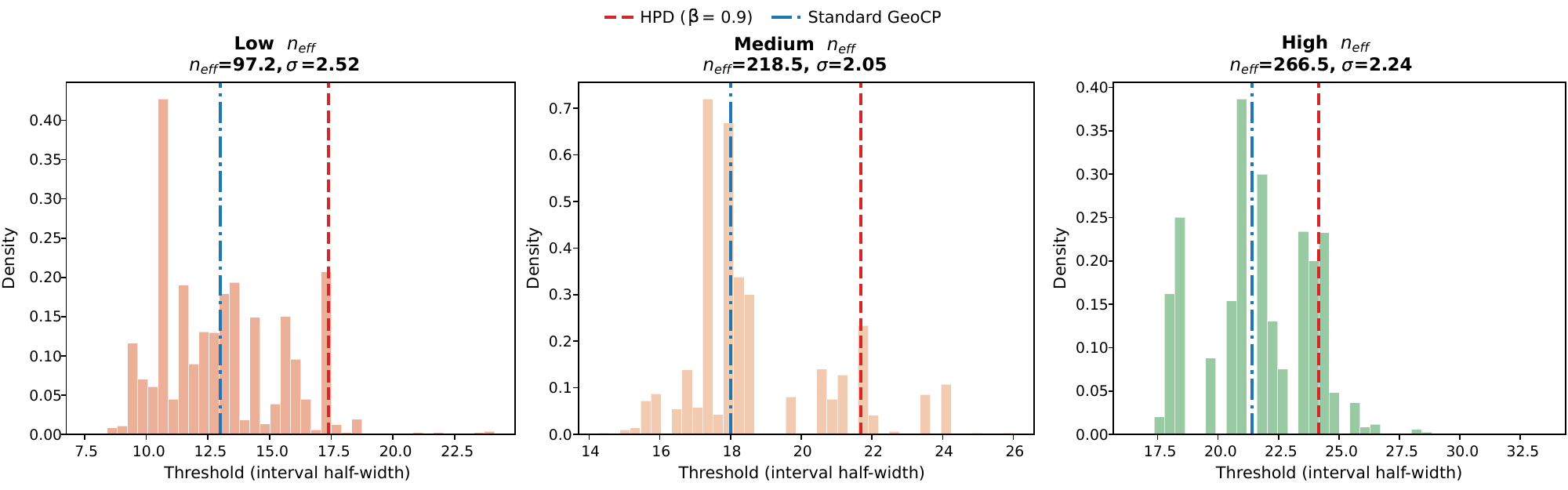}
    \caption{Posterior distribution of prediction interval width (Seattle house price as an example). The samples are selected by $\neff$. Left: low $\neff$, middle: medium $\neff$, right: high $\neff$. The higher the $\neff$ is, the narrower its posterior distribution is.}
    \label{fig:posterior_distribution_samples}
\end{figure}

\begin{figure}[ht]
    \centering
    \includegraphics[width=\linewidth]{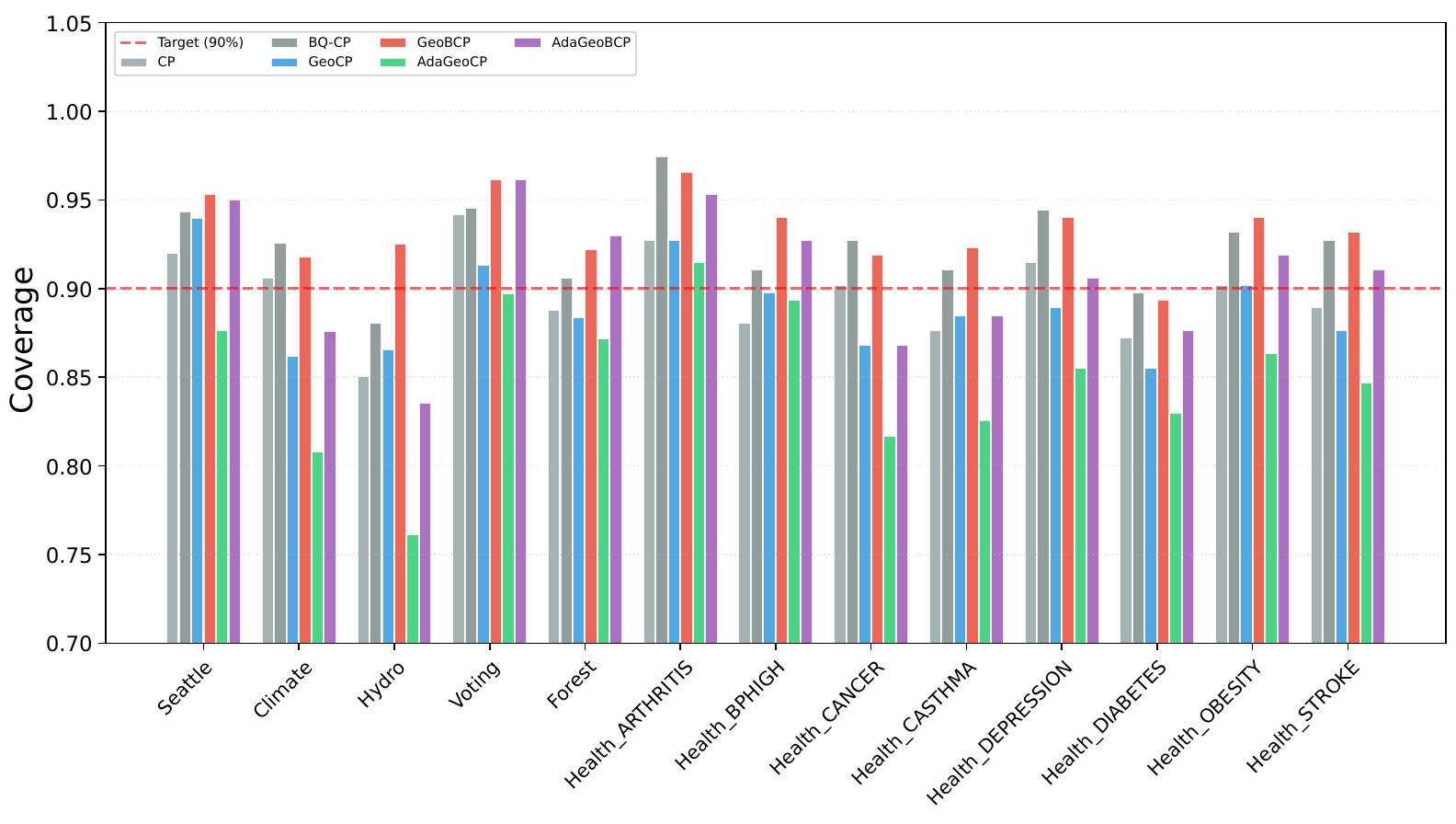}
    \caption{Empirical coverage of all five variants across 13 geospatial datasets (single-split per dataset).}
    \label{fig:coverage_comparison_all_datasets}
\end{figure}

\begin{figure}[ht]
    \centering
    \includegraphics[width=\linewidth]{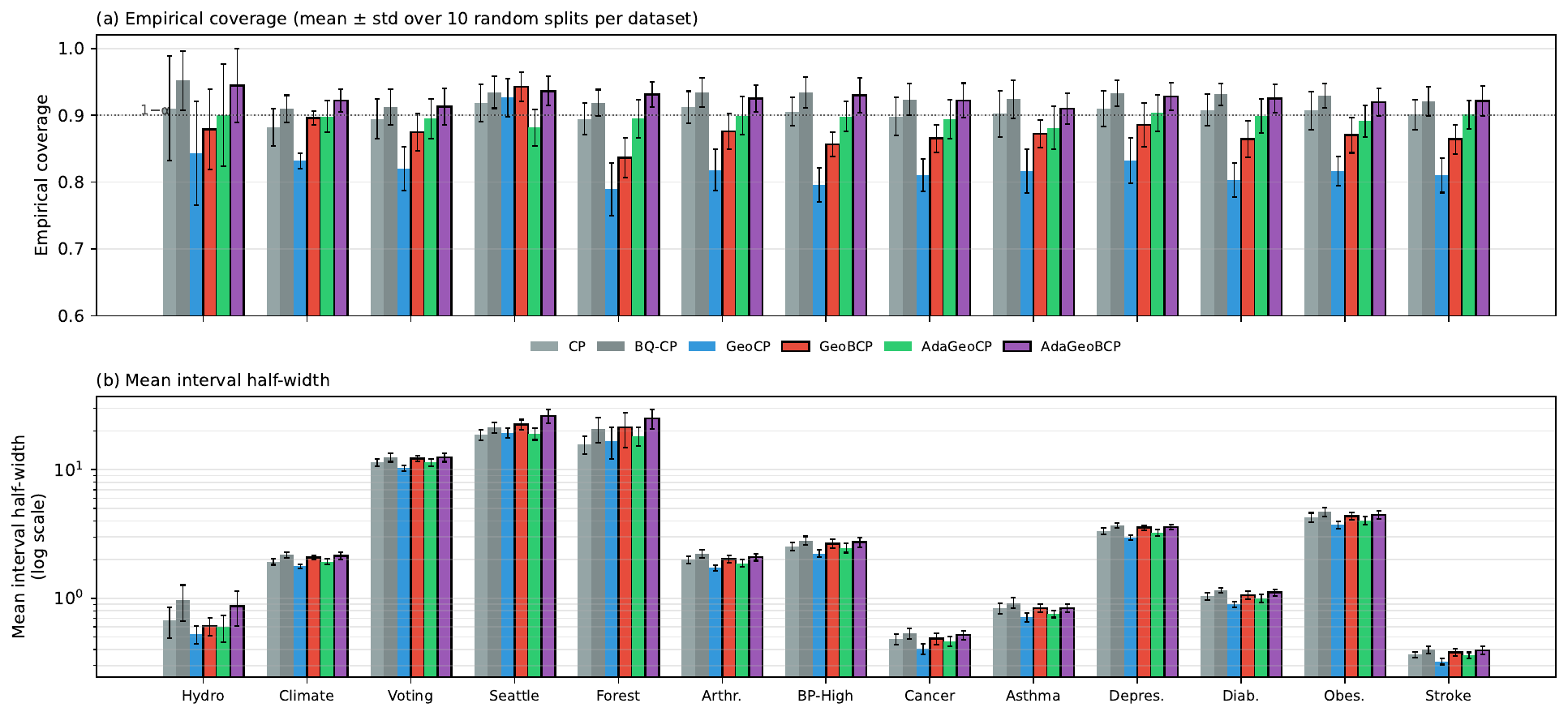}
    \caption{Bootstrap evaluation across all $13$ geospatial datasets ($10$ random train/calibration/test splits per dataset, $\alpha=0.1$, $\beta=0.9$); bars show the bootstrap mean and error bars show the bootstrap standard deviation. Proposed methods (GeoBCP, AdaGeoBCP) are outlined in black. (a)~Empirical coverage with the marginal target $1-\alpha = 0.9$ marked. AdaGeoBCP is the only method that handles distribution shift \emph{and} maintains coverage at or above target on all $13/13$ datasets (matching the i.i.d.\ baseline BQ-CP at $13/13$, but additionally handling shift). Fixed-bandwidth GeoCP and GeoBCP under-cover on most datasets ($1/13$ each), confirming that locality-induced under-coverage in weighted CP is a real and widespread phenomenon, and that the Bayesian overlay alone is not enough---adaptive bandwidth and the Bayesian posterior together are required, as instantiated by AdaGeoBCP. (b)~Mean interval half-width on a log scale (data-set-specific scales differ by two orders of magnitude). Bootstrap raw data are in \texttt{bootstrap\_all\_datasets.csv} in the supplementary material.}
    \label{fig:bootstrap_matrix}
\end{figure}

\end{document}